\PassOptionsToPackage{pdfpagelabels=false}{hyperref} 
\documentclass[english,a4paper,11pt]{article}

\usepackage{fullpage}

\usepackage{amsxtra, amsfonts, amssymb, amstext}
\usepackage{amsthm}
\usepackage{booktabs}
\usepackage{longtable}
\usepackage{nicefrac}
\usepackage{xspace}
\usepackage[noadjust]{cite}
\usepackage{url}\urlstyle{rm}
\usepackage{graphics,color}
\usepackage[colorlinks]{hyperref}
\definecolor{linkblue}{rgb}{0.1,0.1,0.8}
\hypersetup{colorlinks=true,linkcolor=linkblue,filecolor=linkblue,urlcolor=linkblue,citecolor=linkblue}
\usepackage[algo2e,ruled,vlined,linesnumbered]{algorithm2e}
\newcommand{\assign}{\leftarrow}
\usepackage{graphicx}
\usepackage{wrapfig}


\newtheorem{theorem}{Theorem}
\newtheorem{lemma}[theorem]{Lemma}


\newcommand{\ignore}[1]{}


\allowdisplaybreaks[1]

\hyphenation{}


\newcommand{\N}{\mathbb{N}}
\newcommand{\R}{\mathbb{R}}

\renewcommand{\epsilon}{\varepsilon}

\newcommand{\A}{\mathcal{A}}
\newcommand{\Pe}{\mathcal{P}}

\DeclareMathOperator{\mut}{mut}
\DeclareMathOperator{\cross}{cross}

\newcommand{\E}{\mathbb{E}}
\renewcommand{\Pr}{\mathbb{P}}

\DeclareMathOperator{\shift}{0\rightarrow 1}

\DeclareMathOperator{\Bin}{Bin}

\newcommand{\onemax}{\textsc{OneMax}\xspace}

\newcommand{\OneMax}{\textsc{OneMax}\xspace}
\newcommand{\OM}{\textsc{Om}\xspace}

\newcommand{\leadingones}{\textsc{LeadingOnes}\xspace}
\newcommand{\LO}{\textsc{Lo}\xspace}

\newcommand{\oea}{$(1 + 1)$~EA\xspace}

\newcommand{\ga}{$(1 + (\lambda,\lambda))$~GA\xspace}
\newcommand{\gaopt}{$(1 + (\lambda,\lambda))$~GA$_\text{mod}$\xspace}

\newcommand{\RLSopt}{RLS$_{\text{opt}}$\xspace}
\newcommand{\oeares}{$(1 + 1)$~EA$_{>0}$\xspace}
\newcommand{\oeashift}{$(1 + 1)$~EA$_{0\rightarrow 1}$\xspace}
\newcommand{\tpo}{Greedy $(2+1)$~GA\xspace}
\newcommand{\tpores}{Greedy $(2+1)$~GA$_{\text{mod}}$\xspace}


\title{Towards a More Practice-Aware Runtime Analysis of Evolutionary Algorithms}

\author{
Eduardo Carvalho Pinto$^1$
\and
Carola Doerr$^2$
}

\date{
$^1$DreamQuark, Paris, France\\
$^2$Sorbonne Universit\'e, CNRS, LIP6, Paris, France\\[2ex]
The report is as of July 2017\\[1.5ex]
{\footnotesize{This report summarizes results obtained during the Master (M2) internship of Eduardo at LIP6. Some of the results have been communicated at EA 2017~\cite{CarvalhoD17} and PPSN 2018~\cite{CarvalhoD18}. In December 2018 we decided to make this report available because it contains some otherwise unpublished results. We did not revise the text though. Some references may therefore be outdated, please carefully check the papers~\cite{CarvalhoD17,CarvalhoD18} for updated references. If you are interested in a specific result and want to know the latest version, please do not hesitate to get in touch.}}}

\begin{document}
\maketitle 

{\sloppy
\begin{abstract}
Theory of evolutionary computation (EC) aims at providing mathematically founded statements about the performance of evolutionary algorithms (EAs). The predominant topic in this research domain is \emph{runtime analysis}, which studies the time it takes a given EA to solve a given optimization problem. Runtime analysis has witnessed significant advances in the last couple of years, allowing us to compute precise runtime estimates for several EAs and several problems.

Runtime analysis is, however (and unfortunately!), often judged by practitioners to be of little relevance for real applications of EAs. Several reasons for this claim exist. We address two of them in this present work: 
(1) EA implementations often differ from their vanilla pseudocode description, which, in turn, typically form the basis for runtime analysis. To close the resulting gap between empirically observed and theoretically derived performance estimates, we therefore suggest to take this discrepancy into account in the mathematical analysis and to adjust, for example, the cost assigned to the evaluation of search points that equal one of their direct parents (provided that this is easy to verify as is the case in almost all standard EAs).
(2) Most runtime analysis results make statements about the expected time to reach an optimal solution (and possibly the distribution of this optimization time) only, thus explicitly or implicitly neglecting the importance of understanding how the function values evolve over time. We suggest to extend runtime statements to \emph{runtime profiles}, covering the expected time needed to reach points of intermediate fitness values.

While certainly other reasons exist to believe that runtime analysis is of limited practical relevance, and while our suggested solutions can only serve as a pointer to a more practice-aware runtime analysis theory, we are confident that our work helps to initiate a more constructive exchange between theoretical and empirically-driven research in EC.

As a more direct consequence, we obtain a result showing that the greedy (2+1) GA of Sudholt [GECCO 2012] outperforms any unary unbiased black-box algorithm on \onemax, thus giving strong evidence that our suggested performance measure has the potential to drastically change the theoreticians' view on long-standing hypotheses in evolutionary computation. 
\end{abstract} 


\section{Introduction}%
\label{sec:introduction}

Evolutionary algorithms (EAs) are bio-inspired black-box optimization heuristics that are successfully applied to a broad range of industrial and academic optimization problems. Their practical relevance has motivated theoreticians to analyze EAs by mathematically means, aiming at providing mathematically founded insights into the working principles of EAs. Unfortunately, we see today a rather big gap between theoretical and practice-driven research in evolutionary computation (EC). Unlike in classical computer science, where a fruitful interplay between mathematically- and empirically-driven research exists, theory of EAs is regularly considered ``useless'' in the more practically-oriented part of the EC research community, cf., e.g., Footnote 4 in~\cite{EibenHM99}, where it is stated that ``the best thing a practitioner of EA's can do is to stay away from theoretical results''. 
One critical reason for such claims is the fact that EAs are particularly useful when the problem at hand is not analyzable by thorough mathematical means; while for problems that do admit such a mathematical approach, problem-tailored algorithms are typically much more powerful than heuristics. This gap is very difficult, if not impossible, to close. At the same time, however, we also see that some of the other reasons for practitioners not to follow too closely what theory of EC can offer could be easily addressed. We suggest in this work two different steps in this direction, i.e., towards a more practice-aware theory of EAs. Our hope is to trigger with this work a more constructive exchange between theoretical and empirical research streams in EC. 

As a side effect of our work, we also obtain some theoretical results that are interesting in their own rights. We summarize our proposed changes and results in Sections~\ref{sec:intro-runtime} and~\ref{sec:intro-profiles}.

\subsection{Implementation-Aware Runtime Analysis}
\label{sec:intro-runtime}
The by-far most relevant performance measure in discrete black-box optimization is the number of fitness evaluations that an algorithm needs until it evaluates for the first time an optimal search point. This \emph{runtime} (also: \emph{optimization time}) measure is often motivated by the fact that in typical applications the function evaluations are the most costly part of the black-box optimizer. Another motivation for this performance measure is the fact that black-box optimizers are often used in situations where the data is subject to privacy concerns, so that the data owner will be able to answer individual function evaluation requests but does not or cannot reveal any other information about its data. Counting function evaluations is a standard complexity measure both in EC but also in classical computer science, where it is studied under the notion of \emph{query complexity}. 

One seemingly negligible difference between the runtime studied in the theory of EAs and that studied in practice- or empirically-motivated work is the fact that in most theory works function evaluations are counted also for iterations in which the offspring equals its parent (or one of its parents in case of recombination). In many EAs this situation occurs frequently. For example, when using standard bit mutation (cf. Section~\ref{sec:implement} for a description of the here-mentioned variation operators, algorithms, and problem classes) with mutation rate $p=1/n$, the probability that the mutated offspring equals its parent is $(1-p)^n \approx 1/e = 0.368$, showing that, for example, the \oea, which only uses standard bit mutation as variation operator, uses a $1/e$ fraction of its function evaluations for offspring that are identical to the parent. In practical implementations one would of course avoid such evaluations, in particular when it is easy to check (as in this case) that the offspring equals its parent. 

A similar situation, in which the offspring is likely to equal one of its parents, occurs when two similar parents are recombined. This is quite frequent in typical $(\mu+\lambda)$~GAs. Furthermore, a biased crossover favoring entries from one of the parents as, for example, used in the \ga proposed in~\cite{DoerrDE15}, has a relatively high probability to reproduce one of the parent solutions even if the Hamming distance of the two parents is large. Since in these cases it is typically quite easy to determine if the recombined offspring equals one of its parents, the offspring would not be evaluated in typical implementations, while in existing theoretical runtime analysis statements we would charge the algorithm one function evaluation for creating this offspring.

These discrepancies between the analyzed and the actually implemented EAs can result in significant gaps between empirically observed and mathematically derived performance estimates. To reduce this gap, we suggest to reflect these observations in the runtime bounds, by analyzing EA variants in which offspring are not evaluated if they equal one of their parents and this equality is easy to determine. In the case of standard bit mutation, two straightforward ways to implement this strategy exist. The first one simply ignores such iterations, thus effectively resampling an offspring until it differs from its parent. This strategy can be efficiently implemented by sampling the mutated offspring from a conditional distribution. 
The second idea is to flip \emph{one} random bit in case no bit is flipped in the mutation step (thus effectively shifting the probability mass to flip zero bits to the probability to flip exactly one bit). In the case of crossover there are many ways to deal with such situations. When crossover is followed by mutation, we suggest to use one of the just-mentioned two solutions to ensure flipping at least one bit in the mutation step if the offspring created by recombination is merely a copy of one of its parents. When crossover is not followed by mutation (as in the \ga), we suggest to simply omit the function evaluation of such offspring. 

We analyze in Sections~\ref{sec:oea-implement} and~\ref{sec:ga-implement} how these suggestions influence the performances of the \oea, the \ga proposed in~\cite{DoerrDE15}, and the \tpo from~\cite{Sudholt12}. While the theoretical bounds are easy to obtain, we observe a quite surprising result. We show in Theorem~\ref{thm:tpo} that our simple modification of the \tpo yields a better expected optimization time on \onemax than any unary unbiased (i.e., mutation-based) black-box algorithm. As far as we know, this shows for the first time that a classical genetic algorithm with crossover can outperform all unary unbiased black-box algorithms on the simple hill climbing problem \onemax. Given the long series of works trying to establish such results (cf. the literature surveys in~\cite{DoerrDE15,Sudholt12}), this shows that our simple suggestion of a more implementation-aware runtime analysis can have substantial impact on the theoretician's view on some of the most fundamental questions in EC. 

We note that previous attempts to establish more meaningful complexity measures in EC exists, most notably in the work of Jansen and Zarges~\cite{JansenZ11}, who propose to profile how much time is spend in each of the steps of an EA relative to the cost of the function evaluation. Jansen and Zarges also briefly discuss the here-proposed resampling variant of the \oea, but conclude from their work that ``the simple cost model [of not counting mutation steps of the \oea in which no bit is flipped] is inadequate''. We do agree that counting function evaluations may not always reflect very well the wall-clock time spend on a problem (in particular if Hash-tables are used to achieve the already evaluated search points or if evaluation can be done in constant time for offspring resembling previously evaluated ones). It is still, as mentioned above, the most relevant cost measure in EC. The research question that we address in this work is how to deal with offspring that equal their parents, and our suggestion is to adopt a more implementation-aware view in runtime analysis. 

\subsection{Runtime Profiles}
\label{sec:intro-profiles}
Our second suggestion concerns the problem that in most theoretical works on EAs for discrete optimization problems, only the expected times to hit an optimal solution for the first time are reported. In realistic environments, we do not know when this is the case, making it equally important to understand how the fitness values \emph{evolve} over time. We therefore introduce in Section~\ref{sec:profiles} the concept of \emph{runtime profiles}, the expected time needed to hit intermediate target values. In the case of \onemax or \leadingones functions of dimension~$n$, these runtime profiles could be the expected times to reach any fitness level $i \in [n]$, while for problems taking less canonical fitness values, other intermediate target values could be used. In short, our suggestion is to state \emph{runtime profiles} of an algorithm rather than only the optimization time. 

As we shall discuss in Section~\ref{sec:profiles}, it is quite interesting to observe how the first hitting times for the different targets evolve. Already for the simple $\leadingones$ benchmark functions we see that the \oea-variants proposed in Section~\ref{sec:oea-implement} are superior to RLS for all target fitness values $i$ that are smaller than some relatively large threshold value $v$, while RLS reaches fitness levels $i \geq v$ faster than the \oea-counterparts. Reporting only the expected optimization time therefore does not do justice to the better performance of the \oea-variants in the earlier parts of the optimization process. 

We are confident that the proposed runtime profiles will be very useful for the design and the analysis of non-static parameter and operator choices in EAs, such as adaptive mutation or crossover rates, selection pressure, etc. This topic, also studied under the notions of \emph{hyper-heuristics}, \emph{meta-heuristics}, etc., has very recently seen increased interest in the theory of EC community~\cite{DoerrD15self,DoerrDY16PPSN,DoerrDK16PPSN,DangL16}. It is a highly relevant topic in empirically-driven research in EC, cf.~\cite{AletiM16,EibenHM99,EibenMSS07,KarafotiasHE15} and references therein.

Our runtime profiles complement the \emph{fixed-budget perspective} proposed in~\cite{JansenZ14}, where statements about the expected fitness values after a fixed number of fitness evaluations are sought. Runtime profiles and fixed-budget perspectives are orthogonal views on the performance of EAs, both aiming at providing more insight into the optimization behavior than what the single runtime measure can offer. Neither of these two measures is entirely new but rather summarize performance statements frequently reported in empirical works. The aim of~\cite{JansenZ14} as well as our own work is to motivate researchers working in the theory of EC to include in their statements these more informative performance guarantees.    

\textbf{Complexity of Implementing Our Suggestions.} We emphasize that from a theoretical point of view all our suggestions are easy to implement. Indeed, all results reported in this work can be easily derived from existing runtime results. But, as our result for the \tpores shows, it may drastically change our view on classical EAs. 

\textbf{Appendix.} Detailed experimental data for the figures reported in the main file as well as some technical statements needed in our proofs are reported in the appendix. 

\section{Implementation-Aware Runtime Analysis}
\label{sec:implement}  

Standard implementations of EAs often differ from their vanilla pseudocode descriptions. These ``tweaks'' are either strictly needed to make an algorithmic idea implementable 
or just ``nice to have'', in order to speed up an algorithm without changing its performance. In this section, we describe some of such typical differences, and analyze their impact on standard runtime results in EC. 

\textbf{Scope:} In the following, we consider the maximization of single-objective pseudo-Boolean functions $f:\{0,1\}^n \rightarrow \R$, but all our suggestion can be applied to other search and objective spaces. 

\textbf{Notation:} We use $n$ to denote the problem dimension. We abbreviate $[n]:=\{1,2,\ldots,n\}$ and $[0..n]:=\{0\} \cup [n]$. By $\ln$ we denote the natural logarithm to base $e:=\exp(1)$. For every positive integer $d$ we denote by $H_d$ the $d$-th harmonic number $H_d:=\sum_{i=1}^d{1/i} = \ln(d) + \gamma + O(1/d)$ with $\gamma \approx 0.5772156649$ being the Euler–Mascheroni constant.

\subsection{The (1+1) EA}
\label{sec:oea-implement}

One of the best-studied EAs in the theory of EC is the \oea. It has a simple structure and is often used as a showcase to understand the role of global mutation in combination with elitist selection. 

The \oea works as follows. It has a very restricted memory (``\emph{population}''), keeping only the best so-far solution candidate in the memory (and the most recent one in case several search points of current-best function value have been evaluated). In the \emph{mutation step}, an \emph{offspring} $y$ is sampled from this current-best solution $x$ by changing each bit in $x$ with some \emph{mutation probability} $p \in (0,1)$, independently of all other bits. In the \emph{selection step}, the \emph{parent} $x$ is replaced by its offspring $y$ if and only if the \emph{fitness} of $y$ is at least as good as the one of $x$; i.e., in the case of maximization, if and only if $f(y) \geq f(x)$. 

When implementing the \oea, it would be inconvenient and time-consuming to sample in each iteration and for each bit the random Bernoulli variable describing whether or not the corresponding bit should be flipped. A common way to implement the standard bit mutation is to sample in the beginning of the mutation step a random variable $\ell$ from the binomial distribution $\Bin(n,p)$ with $n$ trials and success probability $p$, i.e., $\Pr[\ell=k] = \binom{n}{k} p^k (1-p)^{n-k}$ for all $k \in [0..n]$. Once $\ell$ is sampled, $\ell$ different (i.e., without replacement) positions $i_1,\ldots,i_\ell \in [n]$ are chosen uniformly at random and $y$ is created from $x$ by copying $y_i := x_i$ for $i \in [n] \setminus \{ i_1,\ldots,i_\ell\}$ and setting $y_i := 1-x_i$ for $i \in \{ i_1,\ldots,i_\ell\}$. It is not difficult to verify that this implementation is identical to the one with $n$ independent Bernoulli trials.

This way, the \oea can be stated in the form of Algorithm~\ref{alg:oea}. Note here that no termination criterion is specified. This is justified by the fact that runtime analysis studies the expected time this algorithm needs to find an optimal solution. In real implementations, of course, one has to specify a termination criterion, which can be, for example, an upper bound on the number of iterations, on CPU time, or the number of iterations in which no improvement has been observed, but also the first point in time a certain fitness-value has been reached, etc. 

 \begin{algorithm2e}[t]%
	\textbf{Initialization:} 
	Sample $x \in \{0,1\}^{n}$ uniformly at random and compute $f(x)$\;
  \textbf{Optimization:}
	\For{$t=1,2,3,\ldots$}{
		Sample $\ell \sim \Bin(n,p)$\label{line:ell}\;
		$y \assign \mut_{\ell}(x)$\;
		evaluate $f(y)$\label{line:oeaeval}\;
		\lIf{$f(y)\geq f(x)$}{$x \assign y$}	
}
\caption{The well-known \oea with mutation probability $p \in (0,1)$ for the maximization of a pseudo-Boolean function $f:\{0,1\}^n \rightarrow \R$}
\label{alg:oea}
\end{algorithm2e}

\begin{algorithm2e}%
	\textbf{Input:} $x \in \{0,1\}^n$, $\ell \in \N$\;
		\label{line:elloea}Select $\ell$ different positions $i_1,\ldots,i_{\ell} \in [n]$ u.a.r.\;
	  $y \assign x$\;
		\lFor{$j=1,...,\ell$}{$y_{i_j}\assign 1-x_{i_j}$}
\caption{$\mut_{\ell}$ chooses $\ell$ different positions and flips the entries in these positions.}
\label{alg:mut}
\end{algorithm2e}

Implementing the Bernoulli trials of the standard bit mutation as above did not affect the performance of the \oea. But there is another tweak that can speed up the algorithm significantly. The idea is simple and therefore found in most standard implementations of the \oea. The probability that $\ell$ in line~\ref{line:ell} of Algorithm~\ref{alg:oea} equals zero is $(1-p)^n$. For the often recommended choice $p=1/n$ this results in a $1/e$ fraction of iterations in which no bit is flipped at all. In the vanilla description of the \oea above, we would still count one function evaluation for any of these iterations (cf. line~\ref{line:oeaeval}). In practice, however, we would rather ignore such iterations by (1) either re-sampling $\ell$ until we sample a value $\ell>0$ (this is the case if we simply skip line~\ref{line:oeaeval} in Algorithm~\ref{alg:oea} whenever $\ell=0$) or (2) by using $\ell=1$ if $\ell=0$ is sampled. We refer to the 
\oea using the first option as \oeares (``resampling EA'', Algorithm~\ref{alg:oeares}), and we call the algorithm using the second option the \oeashift (``shift EA'', Algorithm~\ref{alg:oeashift}). The \oeares can efficiently be implemented by sampling $\ell$ from the conditional distribution $\Bin_{>0}(n,p)$, which assigns to each $k$ a probability of $\Pr[\ell=k|\ell>0] = \binom{n}{k} p^k (1-p)^{n-k}/(1-(1-p)^n)$.

 \begin{algorithm2e}%
	\textbf{Initialization:} 
	Sample $x \in \{0,1\}^{n}$ uniformly at random and compute $f(x)$\;
  \textbf{Optimization:}
	\For{$t=1,2,3,\ldots$}{
		\label{line:ellres}Sample $\ell \sim \Bin_{>0}(n,p)$\;
		$y \assign \mut_{\ell}(x)$\;
		evaluate $f(y)$\;
		\lIf{$f(y)\geq f(x)$}{$x \assign y$}	
}
\caption{The \oeares with mutation probability $p \in (0,1)$ for the maximization of a pseudo-Boolean function $f:\{0,1\}^n \rightarrow \R$}
\label{alg:oeares}
\end{algorithm2e}

 \begin{algorithm2e}%
	\textbf{Initialization:} 
	Sample $x \in \{0,1\}^{n}$ uniformly at random and compute $f(x)$\;
  \textbf{Optimization:}
	\For{$t=1,2,3,\ldots$}{
		\label{line:ellshift}Sample $\ell \sim \Bin(n,p)$\;
		\lIf{$\ell=0$}{$\ell \assign 1$}
		$y \assign \mut_{\ell}(x)$\;
		evaluate $f(y)$\;
		\lIf{$f(y)\geq f(x)$}{$x \assign y$}	
}
\caption{The \oeashift with standard mutation probability $p \in (0,1)$ for the maximization of a pseudo-Boolean function $f:\{0,1\}^n \rightarrow \R$}
\label{alg:oeashift}
\end{algorithm2e}

In the next two subsections, we discuss how these strategies change the expected optimization time of the \oea on \onemax (Sec.~\ref{sec:implementOM}) 
and on \leadingones (Sec.~\ref{sec:implementLO}), respectively. 
We refer the interested reader to Section~\ref{app:benchmark} in the appendix for a discussion of the regarded benchmark problems.

\subsubsection{Performance of the (1+1) EA Variants on OneMax}
\label{sec:implementOM}

We start our investigations by regarding \onemax, the best-understood benchmark problem in runtime analysis. \onemax is the function that returns as function value the number of ones in a string, i.e., $\OM(x):=\sum_{i=1}^n{x_i}$. We compare the performance of the \oea, the \oeares, and the \oeashift. We also add to our experiments a comparison with Randomized Local Search (RLS), the standard first-ascent hill climber which samples in each iteration a random neighbor at Hamming distance 1 from the current-best solution. We obtain RLS from Algorithm~\ref{alg:oea} by replacing line~\ref{line:ell} with ``Set $\ell =1$;''. 

From a mathematical point of view, the suggested changes do not cause much trouble. In fact, the runtime behavior of the \oea on \onemax (cf.~\cite{HwangPRTC14} and references mentioned therein) and, more generally, linear functions $f:\{0,1\}^n \rightarrow \R, x \mapsto \sum_{i=1}^n{w_i x_i}$ with $w_i \in \R$~\cite{Witt13j} is quite well understood. The following result easily follow from standard fitness level arguments and illustrates how existing runtime results for the \oea can easily be adapted to the \oeares and the \oeashift. 

\begin{theorem}
\label{thm:oeaOM}
The expected optimization time of the \oeares with mutation probability $p \in (0,1)$ for \onemax is at most 
$\frac{ 1  - \left(1 - p \right) ^ n}{p\left( 1 - p \right) ^ {n-1}} H_n$ 
and for the \oeashift it is at most $\frac{1}{(1-p)^{n-1} (np + 1 - p) } nH_n$. 
\end{theorem}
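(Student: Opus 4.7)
The plan is to apply the classical fitness-level method in its most elementary form. I would partition the search space into the levels $L_0, L_1, \ldots, L_n$ with $L_i := \{x \in \{0,1\}^n : \OM(x) = i\}$, so that leaving level $L_i$ toward higher levels corresponds to a strict fitness improvement on \onemax. By the fitness-level theorem, for any $(1+1)$-elitist algorithm the expected optimization time is bounded by $\sum_{i=0}^{n-1} 1/s_i$, where $s_i$ is any lower bound on the single-iteration probability of leaving $L_i$.

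The key step is to choose a simple and tight $s_i$. For a current search point with $i$ ones, a sufficient condition for improvement is that the mutation operator flips exactly one bit and that this bit is one of the $n-i$ zero-bits. Since $\mut_\ell$ picks its $\ell$ positions uniformly at random, this event has probability $\Pr[\ell=1]\cdot(n-i)/n$. Substituting the reindexing $k := n-i$, the fitness-level bound becomes
\[
\sum_{i=0}^{n-1} \frac{n}{(n-i)\,\Pr[\ell=1]} \;=\; \frac{n}{\Pr[\ell=1]}\sum_{k=1}^{n}\frac{1}{k} \;=\; \frac{n\,H_n}{\Pr[\ell=1]}.
\]
So the only quantity that distinguishes the two variants is $\Pr[\ell=1]$.

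For the \oeares, $\ell$ is drawn from $\Bin_{>0}(n,p)$, so
\[
\Pr[\ell=1] \;=\; \frac{n p (1-p)^{n-1}}{1-(1-p)^n},
\]
and plugging this into $n H_n/\Pr[\ell=1]$ yields exactly $\frac{1-(1-p)^n}{p(1-p)^{n-1}}\,H_n$. For the \oeashift, one bit is flipped either when $\Bin(n,p)$ returns $1$ directly or when it returns $0$ and is shifted up; hence
\[
\Pr[\ell=1] \;=\; n p(1-p)^{n-1} + (1-p)^n \;=\; (1-p)^{n-1}\bigl(n p + 1 - p\bigr),
\]
and the bound becomes $\frac{n H_n}{(1-p)^{n-1}(n p + 1 - p)}$, matching the claim.

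There is no real obstacle here beyond bookkeeping: the main temptation to resist is trying to sharpen $s_i$ by also accounting for improvements through $\ell \geq 2$ flips, which would complicate the sum without being needed for the stated bound. The argument is essentially the textbook \onemax analysis of the \oea, with the only change being the appropriate value of $\Pr[\ell=1]$ under the conditional or shifted mutation length distribution.
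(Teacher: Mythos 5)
Your proposal is correct and follows essentially the same route as the paper: a fitness-level argument over the canonical levels $L_i$, lower-bounding the escape probability by the event that exactly one bit is flipped and it is a zero-bit, with $\Pr[\ell=1]$ adjusted for the conditional (resampling) and shifted distributions. Your unified presentation via $n H_n / \Pr[\ell=1]$ is just a slightly tidier packaging of the identical computation.
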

\begin{proof}[Proof of the upper bound for the \oeares on \onemax in Theorem~\ref{thm:oeaOM}]
We do a simple fitness level argument, using the canonical fitness level partition $L_0,L_1,\ldots,L_n$ with $L_i:=\{ x \in \{0,1\}^n \mid \OM(x)=i\}$. 

For $i \in \{0,\dots, n-1\}$ let $p_i$ be the probability to leave fitness level $L_i$ in one iteration of the \oeares with mutation probability $p$ when starting in some $x \in L_i$. Since the fitness $L_i$ is left if one of the $(n-i)$ 0-bits and no other bit is flipped, we can bound $p_i$ from below as
\[ 
p_i \ge \frac{\binom{n-i}{1} p \left( 1 - p \right) ^ {n-1}}{1 - \left(1 - p \right) ^ n},
 \]
using that the probability to sample $\ell=1$ in line~\ref{line:ellres} of Algorithm~\ref{alg:oeares} is $p \left( 1 - p \right) ^ {n-1}/(1 - \left(1 - p \right) ^ n)$.
 Summing up the expected waiting times $1/p_i$, we thus get
\begin{align*}
\mathbb{E}[T_{>0,p}(\OM)] 
&\le \sum\limits_{i = 0}^{n-1} \frac{1}{p_i} 
\le \sum\limits_{i=0}^{n-1} \left( 1  - \left(1 - p \right) ^ n \right) \frac{1}{n-i} \frac{1}{ p \left( 1 - p \right) ^ {n-1}} 
\le \frac{\left( 1  - \left(1 - p \right) ^ n \right)}{p(1-p)^{n-1}} H_n. 
\end{align*}
 \end{proof}

The proof for the \oeashift is very similar. 
\begin{proof}[Proof of the upper bound for the \oeashift on \onemax in Theorem~\ref{thm:oeaOM}]
We use the same fitness level partition as for the \oeares. Letting $p_i$ be the probability to leave fitness level $L_i$ in one iteration of the \oeashift with mutation probability $p$ when starting in some $x \in L_i$, we bound  
\begin{align*}
p_i 
&\ge \binom{n-i}{1} p \left( 1 - p \right) ^ {n-1} + \frac{n-i}{n} (1-p)^n \\
& \ge (1-p)^{n-1} \left( (n-i)p + \frac{n-i}{n}(1-p) \right) \\
& \ge (1-p)^{n-1}(np + 1 - p) \frac{n-i}{n}. 
\end{align*}
Summing up expected waiting times $1/p_i$ yields
\begin{align*}
\mathbb{E}[T_{\shift,p}(\OM)] 
&\le \frac{1}{(1-p)^{n-1} (np + 1 - p) } \sum\limits_{i=0}^{n-1} \frac{n}{n-i} 
\le \frac{1}{(1-p)^{n-1} (np + 1 - p) } nH_n.
\end{align*}
\end{proof}

For the most often recommended choice $p=1/n$, the expected optimization time of the \oea equals
$e n \ln(n) -  1.8925... n + \tfrac{e}{2} \ln(n) + 0.5978... + O(\log n / n)$~\cite{HwangPRTC14}, while the bounds in Theorem~\ref{thm:oeaOM} evaluate to
$(e-1+\tfrac{1}{n})nH_n \approx 1.718 n \ln(n) +0.918 n + \ln(n) + O(1)$ for the \oeares and 
$\frac{e}{2-1/n}nH_n \approx 1.3591 n \ln(n)+0.7845 n + O(1)$ for the \oeashift. 
For this choice of $p$, the bound for the \oeares is tight, as the following theorem shows. 
\begin{theorem}
\label{thm:oeaOMlower}
Let $p = O(n^{-2/3 - \epsilon})$. Then the expected time of the \oeares with mutation probability $p$ on \onemax 
bounded from below by
\[ (1 - o(1))\frac{1 - (1-p)^n}{p(1-p)^n} \min\{\ln n, \ln (1/(p^3n^2)) \}. \]

In particular, for any constant $c>0$, the expected runtime of the \oeares with mutation probability $p=c/n$ on \onemax is at least $(1 - o(1))\frac{e^c - 1}{c} n \ln n$.
\end{theorem}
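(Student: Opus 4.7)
The plan is to carry out a fitness-level lower bound argument focused on the endgame of the optimization, where only a small number of zero bits remain. Let $X_t := n - \OM(x^{(t)})$. For any state with $k$ zeros, the probability of strict improvement in one iteration of \oeares is obtained by renormalizing the corresponding probability in the standard \oea by $1/(1-(1-p)^n)$, since \oeares simply discards the $\ell=0$ event. The dominant contribution is ``flip exactly one of the $k$ zeros and no other bit'', contributing $kp(1-p)^{n-1}$ to the numerator; the remaining multi-bit contributions are secondary terms controlled by factors such as $kp$ and $knp^2$. As long as $k$ is small enough that these secondary terms are negligible,
\[ p_k^{>0} \le (1+o(1))\frac{kp(1-p)^{n-1}}{1-(1-p)^n}, \]
so the expected waiting time at fitness level $n-k$ is at least $(1-o(1))\frac{1-(1-p)^n}{kp(1-p)^{n-1}}$.

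Next, I would show that with probability $1-o(1)$ the algorithm visits every fitness level $n-k$ for $k\in\{1,\ldots,m\}$, for an appropriate threshold $m$. A union bound over the possible ``skip'' events (multi-bit improvements bypassing a tracked level), together with the hypothesis $p = O(n^{-2/3-\epsilon})$, allows $m$ to be chosen so that $\ln m \ge (1-o(1))\min\{\ln n, \ln(1/(p^3 n^2))\}$. Combining with the fitness-level summation and using $(1-p)^{n-1}=(1+o(1))(1-p)^n$ and $H_m=(1+o(1))\ln m$ yields
\[ \E[T_{>0,p}(\OM)] \ge (1-o(1))\sum_{k=1}^m \frac{1-(1-p)^n}{kp(1-p)^{n-1}} = (1-o(1))\frac{1-(1-p)^n}{p(1-p)^n}\,\ln m, \]
which is the stated bound. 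For the ``in particular'' case $p=c/n$ one has $p^3 n^2 = c^3/n \to 0$, so $\min = (1+o(1))\ln n$, and $(1-p)^n\to e^{-c}$, giving the claimed $(1-o(1))\tfrac{e^c-1}{c}\,n\ln n$.

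The main obstacle is the second step: the upper bound on $m$ coming from the requirement that the secondary drift terms be $o(1)$ must be balanced against the no-skip union bound, so that $\ln m$ still matches $\min\{\ln n, \ln(1/(p^3 n^2))\}$ up to $(1-o(1))$ factors. The restriction $p = O(n^{-2/3-\epsilon})$ is precisely the regime in which $p^3 n^2 = o(1)$ and a non-trivial balance exists: the $\ln n$ branch of the minimum is active for $p \le 1/n$, while for larger $p$ the tighter $\ln(1/(p^3 n^2))$ branch takes over as multi-bit corrections force $m$ to be smaller.
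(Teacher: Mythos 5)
Your overall strategy (a lower-bound fitness-level argument over the last $m$ levels) is genuinely different from the paper's, which instead follows Witt's proof of the $\Omega(\frac{e^c}{c}n\ln n)$ bound for the \oea on linear functions: it conditions on never flipping more than $\tilde p n\ln n$ bits, shows the process reaches a state with about $s_{\max}=1/(2n\tilde p^2\ln n)$ zero-bits, and then applies the multiplicative drift \emph{lower} bound theorem with the drift estimate of Lemma~\ref{lem:ImprovementBound} (rescaled by $1/(1-(1-p)^n)$ for the conditional sampling) and $s_{\min}=n\tilde p\ln^2 n$. Your first step (the per-level improvement probability is $(1+o(1))\,kp(1-p)^{n-1}/(1-(1-p)^n)$ for $k$ small enough) is fine and plays the role of the paper's drift upper bound.

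The genuine gap is in your second step, and it is not merely a bookkeeping issue: the ``no-skip'' union bound cannot be balanced so as to recover the full logarithmic factor. From a state with $k$ zero-bits, the conditional probability that an accepted improvement jumps by two or more is $\Theta(kp)$, so the union bound over levels $1,\dots,m$ costs $\Theta(m^2p)$, forcing $m=o(p^{-1/2})$. For $p=c/n$ this caps $m$ at $o(\sqrt{n/c})$ and hence $\ln m\le(\tfrac12+o(1))\ln n$, so your argument proves only $(1-o(1))\frac{e^c-1}{2c}n\ln n$ --- half the claimed leading constant --- and the loss is worse for larger $p$ in the admissible range. (The constraint you do discuss, $m=o(1/(np^2))$ from the secondary drift terms, is the harmless one; the skip bound is the binding one and you cannot choose $m$ to satisfy both and still have $\ln m=(1-o(1))\min\{\ln n,\ln(1/(p^3n^2))\}$.) The route is salvageable, but not with a union bound: either weight each level by its visit probability, writing $\E[T]\ge\sum_{k\le m}\Pr[\text{level }k\text{ visited}]/p_k$ with $\Pr[\text{level }k\text{ skipped}]=O(kp)$, so that the total loss is $\sum_{k\le m}O(kp)\cdot\frac{1}{kp(1-p)^{n-1}}=O(m/(1-p)^{n-1})=o(p^{-1}\ln m)$ rather than a constant fraction of the main term; or invoke Sudholt's lower-bound fitness-level method, which handles level-skipping via transition probabilities; or switch to the multiplicative drift lower bound as the paper does. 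As written, the proposal does not establish the stated constant.
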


Our proof of Theorem~\ref{thm:oeaOMlower} follows very closely that of Theorem~6.5 in~\cite{Witt13j}. Note that Witt actually proves a lower bound for what he calls ``arbitrary mutation-based EAs'' or ``(1+1) EA$_{\mu}$'', i.e., EAs using as starting point the best out of $\mu$ search points drawn independently and uniformly at random and then using standard bit mutation as only means of variation. Our statement also holds for this larger class of algorithms, i.e., the class of all (1+1) EA$_{\mu,>0}$ not evaluating offspring that equal their immediate parent. 

A useful tool in his analysis is the following drift theorem, also proven in~\cite{Witt13j}. 
\begin{theorem}[Multiplicative Drift Lower Bound from~\cite{Witt13j}]
\label{thm:multidriftlower}
Let $S \subseteq \mathbb{R}$ be a finite set of positive numbers with minimum 1. Let $\{X^{(t)}\}_{t \ge 0}$ be a sequence of random variables over $S$, where $X^{(t+1)} \le X^{(t)}$ for any $t \ge 0$ and let $s_{\min} > 0$. Let $T$ be the random variable that gives the first in point time $t \ge 0$ for which $X^{(t)} \le s_{\min}$. If there exist positive reals $\beta,\delta \le 1$ such that, for all $s > s_{\min}$ and all $t \ge 0$ with $\mathbb{P}(X^{(t)} = s) > 0$,
\begin{enumerate}
\item $\mathbb{E}[X^{(t)} - X^{(t+1)} | X^{(t)} = s] \le \delta s$,
\item $\mathbb{P}(X^{(t)} - X^{(t+1)} \ge \beta s | X^{(t)} = s) \le \beta \delta/ \ln s$,
\end{enumerate}
then for all $s_0 \in S$ with $\mathbb{P}(X^{(0)} = s_0) > 0$,
\[ \mathbb{E}[T | X^{(0)} = s_0] \ge \frac{\ln(s_0) - \ln(s_{\min})}{\delta} \frac{1-\beta}{1+\beta}.\]
\end{theorem}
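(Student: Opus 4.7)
My plan is to follow Witt's proof of Theorem~6.5 in~\cite{Witt13j}, applying the multiplicative drift lower bound of Theorem~\ref{thm:multidriftlower} to the potential $X^{(t)}$ equal to the number of zero-bits of the current search point. The only structural change from Witt's argument is that the resampling variant draws $\ell$ from $\Bin_{>0}(n,p)$ instead of $\Bin(n,p)$, which multiplies every elementary event probability by $1/(1-(1-p)^n)$; this factor therefore propagates cleanly through Witt's calculations as a single multiplicative correction.

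First I would establish that the initial potential is large. Since the initial search point is uniformly random (and for the (1+1)~EA$_{\mu,>0}$ extension mentioned after the statement, the best out of polynomially many such points), a Chernoff bound yields $X^{(0)} \ge n/2 - o(n)$ with overwhelming probability, hence $\ln s_0 = (1-o(1))\ln n$. I would invoke Theorem~\ref{thm:multidriftlower} with $s_{\min} = 1$, noting that the optimization time stochastically dominates the time to enter $\{0,1\}$ that the drift theorem controls.

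The core of the argument is verifying the two drift conditions. For condition~1, the dominant contribution to $\mathbb{E}[X^{(t)} - X^{(t+1)} \mid X^{(t)} = s]$ comes from the event ``exactly one bit is flipped and it is a 0-bit'', which under $\Bin_{>0}(n,p)$ has probability $sp(1-p)^{n-1}/(1-(1-p)^n)$. The remaining accepted transitions require flipping at least two bits and carry additional factors of order $p^2 n$ or smaller; the hypothesis $p = O(n^{-2/3-\epsilon})$ makes these negligible, giving $\delta = (1+o(1))\,p(1-p)^{n-1}/(1-(1-p)^n)$. For condition~2, a drop of size at least $\beta s$ forces flipping at least $\beta s$ many 0-bits, and a binomial tail estimate on the conditional distribution $\Bin_{>0}(n,p)$ bounds this probability by $\beta\delta/\ln s$ for a suitable choice of $\beta = o(1)$ (as in Witt's analysis). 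This is precisely where the term $\min\{\ln n, \ln(1/(p^3n^2))\}$ emerges: it is the largest value of $\ln s_0$ for which a $\beta = o(1)$ satisfying condition~2 can be chosen, and the threshold $p = O(n^{-2/3-\epsilon})$ keeps us in the regime where this minimum is meaningful.

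Plugging these bounds into Theorem~\ref{thm:multidriftlower} yields
\[\mathbb{E}[T] \ge (1-o(1))\,\frac{\ln s_0}{\delta} = (1-o(1))\,\frac{1-(1-p)^n}{p(1-p)^{n-1}}\,\min\{\ln n, \ln(1/(p^3n^2))\},\]
and since $p = o(1)$ the factor $(1-p)^{n-1}$ can be replaced by $(1-p)^n$ within $(1-o(1))$, matching the claimed bound. The ``in particular'' statement follows by substituting $p = c/n$: then $(1-p)^n \to e^{-c}$ and $1-(1-p)^n \to 1-e^{-c}$, so the factor simplifies to $((e^c-1)/c)\,n$ and $\ln(1/(p^3n^2)) = \ln(n/c^3) = (1-o(1))\ln n$. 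The main obstacle I anticipate is verifying condition~2 with enough care to produce the correct $\min$-expression, since this requires simultaneously tracking the binomial tails, the choice of $\beta$, and the range of admissible $s$; by contrast, the resampling correction $1/(1-(1-p)^n)$ that distinguishes the \oeares from the standard (1+1)~EA propagates through all computations in a purely mechanical way.
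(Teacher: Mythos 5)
Your proposal does not address the statement you were asked to prove. The statement is the general multiplicative drift lower bound itself: a purely probability-theoretic claim about an arbitrary non-increasing sequence $\{X^{(t)}\}$ over a finite set $S$ satisfying the two drift conditions, with conclusion $\mathbb{E}[T \mid X^{(0)}=s_0] \ge \frac{\ln(s_0)-\ln(s_{\min})}{\delta}\cdot\frac{1-\beta}{1+\beta}$. What you have written is instead a sketch of how to \emph{apply} this theorem to derive the runtime lower bound for the \oeares on \onemax (i.e., Theorem~\ref{thm:oeaOMlower} of the paper): you set $X^{(t)}$ to be the number of zero-bits, verify the two conditions for the resampling EA, and then \emph{invoke} Theorem~\ref{thm:multidriftlower} as a black box. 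That is circular with respect to the task — the theorem you were asked to establish appears in your argument only as a tool you assume.

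A proof of the actual statement would look quite different and contains no evolutionary algorithm at all. The standard route (this is how Witt proves it in~\cite{Witt13j}; the present paper simply quotes the result without proof) is to pass to the logarithmic potential $Y^{(t)} := \ln(X^{(t)})$ and bound its expected one-step decrease. One splits the drop $X^{(t)}-X^{(t+1)}$ according to whether it is smaller or at least $\beta s$: for small drops one uses the concavity estimate $\ln s - \ln(s-d) \le \frac{d}{s}\cdot\frac{1}{1-\beta}$ for $0 \le d \le \beta s$, so that condition~1 contributes at most $\frac{\delta}{1-\beta}$ to $\mathbb{E}[Y^{(t)}-Y^{(t+1)}]$; for large drops the decrease of $Y$ is at most $\ln s - \ln(1) = \ln s$ (here the assumption that $S$ has minimum $1$ enters), and condition~2 caps the probability of this event by $\beta\delta/\ln s$, contributing at most $\beta\delta$. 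Summing gives an expected per-step decrease of $Y$ of at most $\delta\bigl(\frac{1}{1-\beta}+\beta\bigr) \le \delta\cdot\frac{1+\beta}{1-\beta}$, and an additive drift lower bound applied to $Y$ over the distance $\ln(s_0)-\ln(s_{\min})$ yields the claim. None of these steps — the choice of logarithmic potential, the case split at $\beta s$, the use of the minimum of $S$, the final additive drift argument — appears in your proposal, so the proof of the stated theorem is entirely missing.
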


As a first step towards a proof of Theorem~\ref{thm:oeaOMlower}, we state the following bound on the expected progress, which follows directly from Lemma~6.7 in~\cite{Witt13j}, just taking into account that the \oeares does not sample $0$.
\begin{lemma}
\label{lem:ImprovementBound}
Consider the \oeares with mutation probability $p$ for the maximization of $\onemax$.
Given a current search point with $i$ 0-bits, let $I'$ denote the number of 0-bits in the subsequent search point (after selection). Then we have $\mathbb{E}(i - I') \le \frac{ip}{1 - (1-p)^n} \left( 1 - p + \frac{ip^2}{1-p}\right)^{n-1}$. 
\end{lemma}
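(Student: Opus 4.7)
The plan is to reduce the statement to Witt's Lemma~6.7 from~\cite{Witt13j} by a one-line rescaling argument, using the fact that $\ell=0$ contributes nothing to the expected progress of the standard \oea.

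First, I would recall that Witt's Lemma~6.7 establishes, for the standard \oea with mutation probability $p$ on \onemax, the bound
\[
\mathbb{E}(i - I') \;\le\; ip \left(1 - p + \frac{ip^2}{1-p}\right)^{n-1},
\]
where the expectation is taken over a single iteration that samples $\ell \sim \Bin(n,p)$, flips $\ell$ uniformly chosen bits, and accepts the offspring iff its \onemax-value is at least that of the parent. This is exactly the quantity we wish to bound, but with an unconditional Binomial distribution instead of the conditional $\Bin_{>0}(n,p)$ used by the \oeares.

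The bridge between the two algorithms is the observation that whenever $\ell=0$ the offspring is identical to the parent, so no progress is made and the contribution to $\mathbb{E}(i - I')$ is zero. Decomposing the standard \oea expectation by conditioning on whether $\ell=0$ therefore gives
\[
\mathbb{E}_{\oea}(i - I') \;=\; \Pr[\ell>0]\,\mathbb{E}_{\oea}(i - I' \mid \ell>0) \;=\; \bigl(1 - (1-p)^n\bigr)\,\mathbb{E}_{\oeares}(i - I'),
\]
since the \oeares is by construction the \oea conditioned on $\ell>0$ (both in its distribution of $\ell$ and in the subsequent uniform choice of flipped positions and the deterministic selection rule). Rearranging and plugging in Witt's upper bound for the left-hand side immediately yields the claimed inequality.

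The only potential obstacle is to confirm that Witt's bound in Lemma~6.7 is genuinely an upper bound on $\mathbb{E}(i - I')$ rather than on a related quantity (such as the expected number of flipped 0-bits before selection); a careful reading of~\cite{Witt13j} shows that the quantity bounded there is precisely the post-selection drift, so no adaptation of his proof is required and the rescaling above concludes the argument.
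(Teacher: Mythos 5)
Your proposal is correct and matches the paper's own argument: the paper likewise derives the lemma directly from Witt's Lemma~6.7 by observing that the \oeares is the \oea conditioned on $\ell>0$ and that iterations with $\ell=0$ contribute zero progress, which yields exactly the rescaling factor $1/(1-(1-p)^n)$.
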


Along with the multiplicative drift lower bound theorem (repeated here as Theorem~\ref{thm:multidriftlower}), Theorem~\ref{thm:oeaOMlower} can now be proven as follows.

\begin{proof}
As mentioned above, we follow very closely the proof of Theorem~6.5 in~\cite{Witt13j}. We also use the same notation, just changing the meaning of 0- and 1-bits as Witt considers minimization whereas we regard the maximization. This only has an influence on the notation, not on any of the statements. Using the fact that \onemax is the easiest to optimize pseudo-Boolean linear function (cf.~\cite{DoerrJW12}, the proof easily carries over to the \oeares), it suffices to prove the claimed lower bound for \onemax. 

Let $X^{(t)}$ be the number of 0-bits in the solution at time $t$ and let $\tilde{p} = \max\{p, 1/n \}$. The probability of flipping at least $b = \tilde{p}n\ln n$ bit is superpolynomially small. We can therefore condition on this event not happening in any of the first $n^2$ iterations and pessimistically assume that the runtime of the \oeares is $0$ whenever $b$ or more bits are flipped in the first $n^2$ iterations.

As in~\cite{Witt13j} it is not difficult to argue that with probability $1-o(1)$ the \oeares reaches a state in which the search point in the memory has between $s_{\max}/2$ and $s_{\max}:=1/(2n\tilde{p}^2 \ln n)$ 0-bits. Let $t^*$ be the first point in time that this happens. We bound from below the time needed by the \oeares to reach, starting in the current search point $x^{(t^*)}$ having at most $s_{\max}$ 0-bits, a search point having at most $s_{\min} := n\tilde{p} \ln^2 n$ 0-bits for the first time. 

In order to apply the multiplicative drift theorem, we need to verify the two conditions of Theorem~\ref{thm:multidriftlower}. 
Using that $b = \tilde{p}n \ln n = \beta s_{\min}$ 
and that $X^{(t)}$ only decreases with increasing $t$,
we see that, for all $s_{\min} \leq s \leq s_{\max}$ and for all $t\ge t^{*}$, the probability of making large jumps can be bounded by
\begin{align*}
\mathbb{P}(X^{t} - X^{t+1} \ge \beta s | X^t = s) \le \mathbb{P}(X^{t} - X^{t+1} \ge \beta s_{\min} | X^t = s) = 0 \le \beta \delta / \ln s,
\end{align*}
where in the ``=0'' equality we make use of our condition that the \oeares does not flip more than $b$ bits in the first $n^2$ iterations.

In order to prove the first condition of Theorem~\ref{thm:multidriftlower}, an upper bound on the expected drift, we use that 
$\frac{1-\beta}{1+\beta} = 1 - o(1)$, 
$\ln (s_0/s_{\min}) = (1-o(1))\ln (1/(\tilde{p}^3n^2))$, 
Lemma~\ref{lem:ImprovementBound}, and $1/\tilde{p} \le 1/p$, to see that, for $i \leq s_{\max}$,  
\begin{align*}
\frac{\mathbb{E}(X^{(t)} - X^{(t+1)} | X^{(t)} = i)}{i} &\le \frac{p}{1-(1-p)^n}\left( 1-p + \frac{s_{\max} p^2}{1-p} \right) ^{n-s_{\max}} \\
&\le \frac{p}{1-(1-p)^n}\left( 1-p + \frac{1}{n\ln n} \right) ^{n-s_{\max}} \\
&\le \frac{p}{1-(1-p)^n}\left( (1-p) \left( 1 + \frac{2}{n\ln n} \right) \right) ^{n-s_{\max}} \\
&= (1-o(1))\frac{p(1-p)^n}{1 - (1-p)^n}.
\end{align*}
This shows that the first condition of Theorem~\ref{thm:multidriftlower} holds for $\delta := (1-o(1))\frac{p(1-p)^n}{1 - (1-p)^n}$. 

The multiplicative drift theorem yields a lower bound of 
\[ \mathbb{E}[T_{>0}] \ge (1 - o(1))\frac{1 - (1-p)^n}{p(1-p)^n} \min\{\ln n, \ln (1/(p^3n^2)) \} \]
to go from a solution with at most $s_{\max}$ 0-bits to one with at least $s_{\min}$ 0-bits. 
\end{proof}

The bounds in Theorem~\ref{thm:oeaOM} are monotonically increasing in $p$. Intuitively, the \oeashift and the \oeares converge to RLS when $p$ converges to zero, since the probability distributions for $\ell$ get more and more concentrated around $1$. In fact, 
for $T_p(\OM)$ denoting the upper bound on the expected runtime of either of the two algorithms, 
we observe that 
$\lim_{p \rightarrow 0}{T_p(\OM) \rightarrow nH_n}$, which almost equals the $nH_{n/2} \approx n \ln(n) - 0.1159 n +O(1)$ expected runtime of RLS~\cite{DoerrD16RLS}. Clearly, for $p=0$ the \oeashift equals RLS, so the remaining difference is an artifact of our simple upper bound, not of the algorithm. 

\textbf{Experimental Results.} 
Figure~\ref{fig:oeaOM} reports the average optimization times of 100 independent runs of the \oea, the \oeares, and the \oeashift for $n$ ranging from 100 to $4\,000$ with mutation rate $p=1/n$. Detailed statistical information about these runs are reported in Table~\ref{tab:oeaOM}. We observe that the simple ideas of resampling and shifting probability mass from $0$ to $1$ yields significant performance gains, e.g., $35.6\%$ for the \oeares in comparison with the \oea for $n=4\,000$ and $47.1\%$ for the \oeashift in comparison with the \oea for the same problem size. We also see that, as our upper bounds suggest, the average performance of the \oeashift is better than that of the \oeares. 
\begin{figure}[h!]
\begin{center}
\includegraphics[scale=0.2]{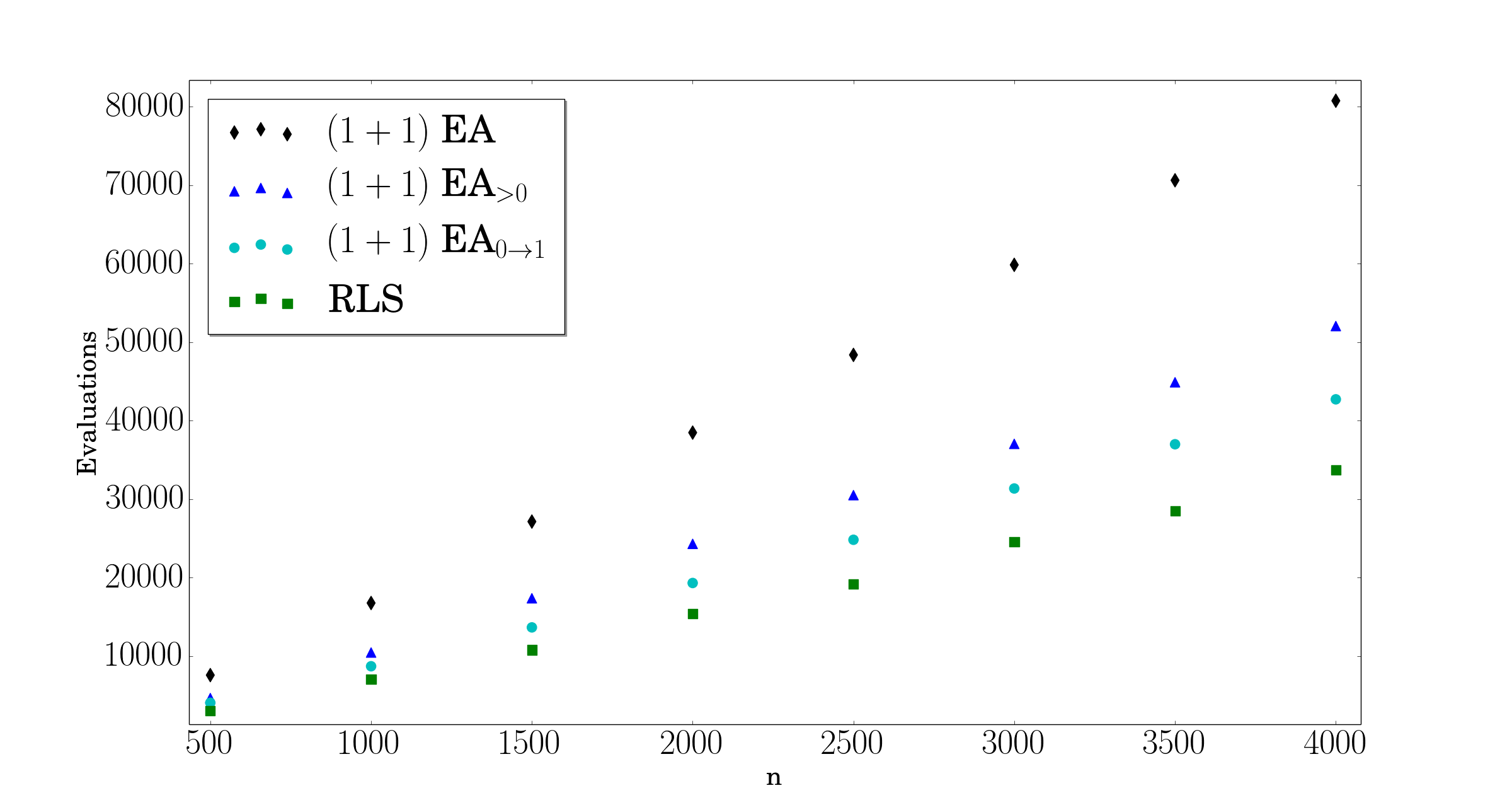}
\end{center}
\caption{Average runtimes for 100 independent runs of the respective algorithms (with mutation probability $p=1/n$ for the \oea and its variants) on \onemax for different problem sizes~$n$. Statistical data is available in Table~\ref{tab:oeaOM}}
\label{fig:oeaOM}
\end{figure}
%


\subsubsection{Performance of the (1+1) EA Variants on Linear Functions}
\label{sec:implementLin}

It is well known since the work of Witt~\cite{Witt13j} that the expected optimization time of the \oea with mutation probability $p=c/n$ on any linear function $f:\{0,1\}^n \rightarrow \R, x \mapsto \sum_{i=1}^n{w_i x_i}$ with $w_1, \ldots, w_n \in \R$ equals 
$(1\pm o(1)) \frac{e^c}{c} n \ln n$, cf. Theorem~3.1(3) in~\cite{Witt13j}. 
With a bit more effort than for the proof of Theorem~\ref{thm:oeaOM} we can easily generalize Witt's result to the \oeares. Using the same approach, similar bounds can also be proven for the \oeashift, but we do not do this explicitly in this work. 

\begin{theorem}\label{thm:eareslin}
Let $c>0$. The expected optimization time of the \oeares with mutation probability $p=c/n$ on any linear function $f:\{0,1\}^n \rightarrow \R, x \mapsto \sum_{i=1}^n{w_i x_i}$ is 
$(1 \pm o(1)) \frac{e^c - 1}{c} n \ln n$.
\end{theorem}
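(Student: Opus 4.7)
The plan is to derive Theorem~\ref{thm:eareslin} from Witt's result for the \oea via a coupling argument, which avoids redoing the full drift analysis. Run the standard \oea, and for iteration $t$ let $\ell_t\sim\Bin(n,p)$ be the number of bits flipped. Call iteration $t$ \emph{non-trivial} if $\ell_t\geq 1$. Trivial iterations leave the state unchanged, and conditional on $\ell_t\geq 1$ the random variable $\ell_t$ has distribution $\Bin_{>0}(n,p)$; moreover, the choice of flipped bit positions is uniform and independent of~$\ell_t$. Hence the subsequence of non-trivial iterations of the \oea, started from a uniform initial point, has the same joint distribution as the iterations of the \oeares started from the same point. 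If $T$ denotes the runtime of the \oea on a linear function $f$ and $N$ the number of non-trivial iterations among $t=1,\dots,T$, then $N$ has the same distribution as the runtime of the \oeares on $f$.

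Next I would compute $\E[N]$ using Wald's identity. Let $\mathcal{G}_t$ be the natural filtration containing all of the algorithm's randomness up to and including iteration~$t$. Then $T$ is a $\mathcal{G}$-stopping time; the indicators $X_t:=\mathbb{1}[\ell_t\geq 1]$ are $\mathcal{G}_t$-measurable and independent of $\mathcal{G}_{t-1}$ with common mean $q:=1-(1-p)^n$; and $\E[T]<\infty$ by Witt's upper bound. These conditions give $\E[N]=\E\!\left[\sum_{t=1}^T X_t\right]=q\,\E[T]$.

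Finally I would combine this identity with Theorem~3.1(3) of~\cite{Witt13j}, which states $\E[T]=(1\pm o(1))\frac{e^c}{c}n\ln n$ for $p=c/n$, and with the asymptotic expansion $q=1-(1-p)^n=(1\pm o(1))(1-e^{-c})$ for $p=c/n$. Multiplying, the $(1\pm o(1))$ factors combine to give
\[
\E[N]=(1\pm o(1))\,\frac{(1-e^{-c})e^c}{c}\,n\ln n=(1\pm o(1))\,\frac{e^c-1}{c}\,n\ln n,
\]
which is the claimed bound (and yields both the upper and lower halves, since Witt's result is two-sided).

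The main subtlety is the careful invocation of Wald's identity: one must verify that each $\ell_t$ (and hence $X_t$) is genuinely independent of the history $\mathcal{G}_{t-1}$, so that $\E[X_t\mid \mathcal{G}_{t-1}]=q$ and the identity $\E[\sum_{t=1}^T X_t]=q\,\E[T]$ holds. This independence is immediate because $\ell_t$ is drawn afresh at the start of iteration~$t$, unaffected by the current search point or by the selection step. Everything else is asymptotic bookkeeping; no new drift computation is required, which is precisely why the bound transports so cleanly from the \oea to the \oeares.
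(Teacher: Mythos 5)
Your argument is correct, and it takes a genuinely different (and arguably cleaner) route than the paper's own proof. The paper proves the two bounds separately: the lower bound is inherited from Theorem~\ref{thm:oeaOMlower} via the fact that \onemax is the easiest linear function also for the \oeares, and the upper bound is obtained by re-opening the drift analysis of Theorem~4.1 in~\cite{Witt13j} and observing that conditioning on $\ell>0$ rescales the drift by $1/(1-(1-p)^n)$. You instead use Witt's two-sided result as a black box: an iteration of the \oea with $\ell=0$ cannot change the state (the offspring equals the parent and is accepted without effect) and cannot be the iteration that first evaluates the optimum, so the subsequence of iterations with $\ell\geq 1$ is distributed exactly as a run of the \oeares, and Wald's identity converts the count of such iterations into $(1-(1-p)^n)\E[T]$; your verification of the hypotheses ($\ell_t$ drawn fresh and independent of the history, $T$ a stopping time with finite expectation) is exactly what needs checking and is sound. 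Two remarks on what each approach buys. Your coupling actually yields the exact identity $\E[T_{>0,p}(f)]=(1-(1-p)^n)\,\E[T_p(f)]$ for \emph{every} objective function and every starting distribution, which is stronger than the asymptotic claim and explains in one line why the same $(1-(1-p)^n)$ factor reappears for \onemax and \leadingones elsewhere in the paper. The paper's route, on the other hand, extends to the \oeashift, for which no such state-preserving coupling with the \oea exists (the shifted iterations do alter the state), which is why the authors go through the drift computation rather than a reduction.
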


\begin{proof}
The lower bound follows from Theorem~\ref{thm:oeaOMlower} by observing that, just as for the \oea, also for the \oeares the easiest to optimize linear function is \onemax~\cite{DoerrJW12}; i.e., the expected optimization time of the \oeares on an arbitrary linear function is at least as large as that on \onemax. In fact, this lower bound holds more generally for any function with unique global optimum. 

The upper bound can be easily obtained from the proof of Theorem~4.1 in~\cite{Witt13j}. One only has to adjust the probabilities of the events in Witt's proof to account for the fact that the \oeares samples the number of bits to flip from the conditional binomial distribution $\Bin_{>0}(n,p)$ instead of the unconditional $\Bin(n,p)$ one. This changes the expected drift by a multiplicative factor of $1/(1-(1-p)^n)$, thus resulting in a multiplicative $(1-(1-p)^n)$ factor for the runtime estimate. Apart from this small change in the expected drift, the remainder of the proof remains identical. 
\end{proof}


\subsubsection{The (1+1) EA on LeadingOnes.}
\label{sec:implementLO}

We now regard \leadingones, another classical benchmark problem in the theory of EC. \leadingones assigns to each bit string the maximal number of initial ones, i.e., $\LO(x):=\max \{ i \in [0..n] \mid \forall j \leq i: x_j=1\}$.

B\"ottcher, B. Doerr, and Neumann showed in~\cite{BottcherDN10} that the expected optimization time of the \oea with mutation probability $p$ on \leadingones equals $\frac{1}{2p^2}((1-p)^{-n+1}-(1-p))$. This expression is minimized for $p \approx \frac{1.59}{n}$, yielding an expected optimization time of approximately $0.77n^2$.

Intuitively, when we ignore iterations with $\ell=0$, the expected optimization time should just decrease by a multiplicative factor of $1 - (1-p)^n$, just as in the case of \onemax. Building on the proof in~\cite{BottcherDN10}, it is not difficult to show that this intuition is correct. Interestingly, this observation has been previously made in~\cite[Theorem~3]{JansenZ11}. The proof of~\cite{BottcherDN10} can also be used to analyze the performance of the \oeashift.

\begin{theorem}\label{thm:LO}
The expected optimization time $\E[T_{>0,p}(\LO)]$ of the \oeares with mutation probability $p$ for \leadingones equals 
\begin{align*}
\frac{1 - (1-p)^n}{2p^2} ((1-p)^{-n+1}-(1-p)) = \frac{(1 - (1-p)^n)^2}{2p^2(1-p)^{n-1}},
\end{align*} 
while for the \oeashift it holds that
\begin{align*}
\E[T_{\shift,p}(\LO)] = \frac{1}{2}\sum\limits_{j=0}^{n-1}\frac{1}{p(1-p)^{n-j} + \frac{1}{n}(1-p)^n}.
\end{align*} 
\end{theorem}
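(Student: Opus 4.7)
I would prove both formulas by a fitness-level argument in the style of B\"ottcher, B.~Doerr, and Neumann~\cite{BottcherDN10}, which for the \oea on \LO yields $\E[T]=\sum_{j=0}^{n-1} 1/(2 p_j)$, where $p_j$ is the per-iteration probability of strictly improving the \LO-value from a search point of \LO-value $j$. Their analysis rests on the invariant that at every time $t$ the bits of the current search point at positions strictly past $\LO(x^{(t)})+1$ are i.i.d.\ uniform, which implies that each fitness level $j\in\{0,\ldots,n-1\}$ is visited with probability exactly $1/2$ (this is the origin of the factor $1/2$ in front of the sum). Both \oeares and \oeashift preserve this invariant because the distribution of the set of flipped positions remains invariant under permutations of the bit indices and selection depends only on the \LO-value, so all that remains for each variant is to identify $p_j$.

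\textbf{\oeares.} Since $\Bin_{>0}(n,p)$ equals $\Bin(n,p)$ conditional on being positive, I would couple the \oeares to the \oea so that the $t$-th iteration of the former corresponds to the $t$-th iteration of the latter in which $\ell>0$; iterations with $\ell=0$ leave $x$ unchanged anyway. Wald's identity applied to the indicators $\mathbf{1}\{\ell_i>0\}$ then gives $\E[T_{>0,p}(\LO)]=(1-(1-p)^n)\,\E[T_{\oea}(\LO)]$. Substituting the BDN value $\E[T_{\oea}(\LO)]=\frac{1}{2p^2}\bigl((1-p)^{-n+1}-(1-p)\bigr)$ yields the first form, and rewriting $(1-p)^{-n+1}-(1-p)=(1-p)^{-n+1}\bigl(1-(1-p)^n\bigr)$ produces the equivalent second form stated in the theorem. (Equivalently, a direct fitness-level computation gives $p_j=p(1-p)^j/(1-(1-p)^n)$, and the sum $\sum_{j=0}^{n-1} 1/(2 p_j)$ collapses to the same answer.)

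\textbf{\oeashift.} Here the coupling breaks since zero-flip iterations are \emph{replaced} rather than deleted, so I would re-run the BDN sum from scratch. The only new ingredient is $p_j$, which I would compute by conditioning on whether the shift event fires: on $\{\Bin(n,p)\ge 1\}$ the mutation is identical to the \oea's and contributes $p(1-p)^j$ to the improvement probability; on $\{\Bin(n,p)=0\}$ (probability $(1-p)^n$) a single bit is flipped uniformly at random, which improves iff position $j+1$ is picked, contributing $(1-p)^n/n$. Hence $p_j=p(1-p)^j+(1-p)^n/n$, and plugging into $\sum_{j=0}^{n-1} 1/(2 p_j)$ yields the stated formula (after the natural reindexing of the summation variable).

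\textbf{Main obstacle.} The only non-mechanical step is transferring the BDN uniform-tail invariant to the \oeashift: exchangeability of the flipped-position set plus \LO-only selection should still suffice, but since the $\ell$-distribution is no longer a pure binomial this deserves an explicit check, because the factor $1/2$ in the final formulas depends on it. Everything else reduces to evaluating geometric waiting times.
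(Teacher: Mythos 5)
Your proposal is correct and follows essentially the same route as the paper: the B\"ottcher--Doerr--Neumann fitness-level framework with the factor $1/2$ coming from the uniform-tail invariant, and the same per-level improvement probabilities ($p(1-p)^j/(1-(1-p)^n)$ for the \oeares and $p(1-p)^j + (1-p)^n/n$ for the \oeashift); your Wald/coupling shortcut for the \oeares is a harmless alternative to the direct computation, which the paper in any case delegates to Jansen and Zarges. One small point: your sum for the \oeashift ranges over exponents $j\in\{0,\ldots,n-1\}$, whereas the displayed formula in the theorem uses $(1-p)^{n-j}$ with $j$ from $0$ to $n-1$ (exponents $1$ to $n$), so the two expressions differ in a single boundary term --- your indexing is the one consistent with the improvement probability at level $j$ and with the closed form obtained for the \oeares, so this appears to be a reindexing slip in the stated formula rather than a gap in your argument.
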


The full proof of Theorem~\ref{thm:LO} for the \oeares can be found in~\cite[Section~2]{JansenZ11}, and the one for the \oeashift is very similar. We nevertheless sketch the main steps and begin by recalling two central theorems from~\cite{BottcherDN10}. B\"ottcher et al. consider an algorithm to be a \oea-variant if it follows the scheme of Algorithm~\ref{alg:oea}. It is not difficult to see that the following results also apply to the \oeares and the \oeashift. 
\begin{theorem}[Theorem~1 in~\cite{BottcherDN10}]\label{thm:LO1}
Let $x \in \{0,1\}$ be a random point with $\LO(x) < n$. Then for any \oea-variant $\A$, the time $A_{n - \LO(x)}$ needed to wait to find an improvement satisfies
\[ \E[A_{n - \LO(x)}] = \frac{1}{\mathbb{P}[\LO(y) > \LO(x)]}, \]
where $y$ denotes the outcome of one iteration of $\A$.
\end{theorem}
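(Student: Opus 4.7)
The plan is to reduce the waiting time for an improvement to a geometric random variable whose success probability depends only on the current \leadingones value of the search point.

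The key structural observation is that whenever $\LO(x)=k<n$, the prefix of $x$ is forced to be $1^k 0$ while the bits at positions $k+2,\ldots,n$ are entirely unconstrained. This prefix is the only part of $x$ that matters for the event $\{\LO(y)>\LO(x)\}$: an improvement requires the mutation operator $\mut_\ell$ to leave bits $1,\ldots,k$ untouched and flip bit $k+1$, which is a function only of the positions selected by $\mut_\ell$, not of the bit values at positions $k+2,\ldots,n$. I would record this as a short lemma stating that there is a single number $p_k$, depending only on $k$, $n$, and the distribution of $\ell$ used by $\A$, such that $\mathbb{P}[\LO(y)>\LO(x)\mid \LO(x)=k]=p_k$ for every $x$ with $\LO(x)=k$. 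Since the right-hand side of the claimed identity then depends on $x$ only through $\LO(x)$, this already reconciles both sides of the theorem with the index $n-\LO(x)$ of $A$.

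Next I would show that this success probability is preserved throughout the waiting period. In each non-improving iteration, $\A$ either rejects $y$ and keeps $x$, or accepts a replacement $y$ with $\LO(y)=k$. In the latter case the structural observation forces the prefix of $y$ to be $1^k 0$ again, so the next iteration still has improvement probability $p_k$. Combined with the fact that the mutation in each iteration is drawn independently of the history, this shows that $A_{n-k}$ is geometrically distributed with parameter $p_k$, whence $\E[A_{n-k}\mid \LO(x)=k]=1/p_k$, which is the claimed formula.

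The only subtle point, and what I expect to be the main obstacle, is the behaviour during non-improving but accepted iterations: one must rule out that adopting an equally-good offspring alters the success probability of the next trial. The structural fact about the forced prefix is precisely what eliminates this concern, and the argument works uniformly for any \oea-variant that fits the scheme of Algorithm~\ref{alg:oea}, including the \oeares and the \oeashift, since the reasoning never inspects the distribution of $\ell$ beyond the fact that it is drawn afresh each iteration. Everything else reduces to the independence of the mutations and the mean of a geometric random variable.
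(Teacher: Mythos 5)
Your argument is correct. Note that the paper itself gives no proof of this statement: it is imported verbatim as Theorem~1 of the cited work of B\"ottcher, Doerr, and Neumann, so there is no in-paper proof to compare against. Your route is nonetheless the canonical one behind that result: since $\LO(x)=k$ forces the prefix $1^k0$, the event $\LO(y)>\LO(x)$ depends only on which positions $\mut_\ell$ selects (keep $1,\dots,k$, flip $k+1$) and not on the suffix, so the improvement probability is a constant $p_k$ over all accepted non-improving steps, the waiting time is geometric, and its mean is $1/p_k$. The one point worth making explicit is that the geometric-distribution conclusion also uses that the distribution of $\ell$ does not change during the waiting phase; this holds here because the mutation law may depend (at most) on the current fitness, which stays at $k$ until the improvement occurs, and it is exactly why the identity applies uniformly to the \oea, the \oeares, and the \oeashift.
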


\begin{theorem}[Theorem~2 in~\cite{BottcherDN10}]\label{thm:LO2}
For any any \oea-variant the expected time needed to find the optimum given a random solution with $\LO$-value $n-i$ is
\[ \E[T_i] = \E[A_i] + \frac{1}{2}\sum\limits_{j=0}^{i-1} \E[A_j], \] 
where $A_j$ is the time needed to find an improvement starting in a random search point of $\LO$-value $n-j$. 
\end{theorem}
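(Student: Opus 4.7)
The plan is to apply Theorems~\ref{thm:LO1} and~\ref{thm:LO2} in sequence to each of the two variants, exactly in the spirit of the original Böttcher--Doerr--Neumann analysis of the \oea. For every intermediate \LO-value $\ell \in \{0,\ldots,n-1\}$ I would first compute the probability $q_\ell$ that a single iteration strictly increases the \LO-value; by Theorem~\ref{thm:LO1} the expected waiting time at level $\ell$ is $\E[A_\ell]=1/q_\ell$, and Theorem~\ref{thm:LO2} then assembles these waiting times into the expected optimization time, essentially as $\tfrac{1}{2}\sum_\ell \E[A_\ell]$ (up to the boundary contribution). A strict improvement at level $\ell$ occurs exactly when the first $0$-bit (at position $\ell+1$) is flipped while the $\ell$ preceding leading ones are left untouched; the values of the remaining $n-\ell-1$ bits are irrelevant for whether the step improves.

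For the \oeares the improving event is incompatible with the discarded sample $\ell_{\mathrm{sample}}=0$, so the conditional probability is $q_\ell^{>0} = p(1-p)^\ell/(1-(1-p)^n)$. For the \oeashift the improving event decomposes into two disjoint contributions: (i) the original binomial draw is at least $1$ and realises the improving flip pattern, contributing $p(1-p)^\ell$; or (ii) the original draw is $0$, is shifted to $1$, and the single uniformly chosen bit happens to be position $\ell+1$, contributing $\frac{1}{n}(1-p)^n$. Hence $q_\ell^{\shift} = p(1-p)^\ell + \frac{1}{n}(1-p)^n$.

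Substituting these probabilities into Theorem~\ref{thm:LO2} gives the \oeashift formula directly, since $p(1-p)^\ell + \frac{1}{n}(1-p)^n$ does not admit any further algebraic simplification. For the \oeares, the sum becomes a geometric series: evaluating $\sum_{\ell=0}^{n-1}(1-p)^{-\ell} = ((1-p)^{-n+1}-(1-p))/p$ and multiplying by the prefactor $(1-(1-p)^n)/(2p)$ gives the first claimed form. The equivalent factored form follows from the algebraic identity $(1-p)^{-n+1}-(1-p) = (1-(1-p)^n)/(1-p)^{n-1}$.

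The main care lies in matching the Böttcher--Doerr--Neumann indexing convention for $A_j$ and the $\tfrac{1}{2}$ averaging factor with our parameterisation; once that is pinned down, neither computation is technically delicate. As a side remark, the \oeares bound also admits a one-line derivation that bypasses Theorems~\ref{thm:LO1} and~\ref{thm:LO2}: the \oeares evaluates exactly those iterations of the \oea in which at least one bit is flipped, so a Wald-type argument yields $\E[T_{>0,p}(\LO)] = (1-(1-p)^n)\,\E[T_p(\LO)]$, and the \oea bound from~\cite{BottcherDN10} supplies $\E[T_p(\LO)]$.
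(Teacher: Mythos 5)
Your proposal does not prove the stated theorem; it proves something downstream of it. Theorem~\ref{thm:LO2} is the structural decomposition $\E[T_i]=\E[A_i]+\tfrac12\sum_{j=0}^{i-1}\E[A_j]$, valid for \emph{any} \oea-variant, and your argument invokes exactly this decomposition as a black box (``Theorem~\ref{thm:LO2} then assembles these waiting times\ldots'') in order to derive the explicit runtime formulas of Theorem~\ref{thm:LO} for the \oeares and the \oeashift. That is circular as a proof of Theorem~\ref{thm:LO2} itself. It also does not match what the paper does with this statement: the paper does not reprove it, it imports it verbatim from B\"ottcher, Doerr and Neumann~\cite{BottcherDN10} and merely remarks that the \oeares and the \oeashift fall under the notion of ``\oea-variant'' used there.

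What an actual proof of Theorem~\ref{thm:LO2} requires, and what is entirely absent from your write-up, is the justification of the factor $\tfrac12$. Writing $T_i=\sum_{j\le i} V_j\,\tilde A_j$, where $V_j$ indicates that the run ever holds a current search point of \LO-value $n-j$ and $\tilde A_j$ is the time spent at that level, one must establish: (i) the invariant that, conditioned on the current \LO-value, the bits strictly behind the first $0$-bit are independent and uniformly distributed throughout the run (selection reveals only the \LO-value, and standard bit mutation treats $0$ and $1$ symmetrically), so that conditioned on $V_j=1$ the search point at level $n-j$ is a \emph{random} point of that value and the conditional expectation of $\tilde A_j$ is indeed $\E[A_j]$; and (ii) that for $j<i$ the level $n-j$ is visited if and only if, at the first moment the \LO-value reaches at least $n-j$, the bit at position $n-j+1$ equals $0$, which by the invariant happens with probability exactly $\tfrac12$. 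Linearity of expectation then yields the claim, with $V_i=1$ deterministically accounting for the undamped leading term $\E[A_i]$. Your computations of the per-level improvement probabilities $q_\ell^{>0}$ and $q_\ell^{\shift}$ are correct and useful, as is the Wald-type shortcut for the \oeares, but they prove Theorem~\ref{thm:LO}, not Theorem~\ref{thm:LO2}.
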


In this setup, B\"ottcher et al. show that for the \oea with mutation probability $p$ it holds that $\mathbb{P}[\LO(y) > \LO(x)] = (1-p)^{j}p$ if $\LO(x) = j$ and conclude that for a fixed mutation rate, the expected optimization time is given by
\[ \frac{1}{2p^2}((1-p)^{-n+1} - (1-p)). \]
As mentioned above, this expression is minimized for $p \approx 1.59/n$, giving an expected optimization time of $\approx 0.77n^2$. 

As mentioned above, it is quite intuitive that the probability that the \oeares leaves fitness level $j$ equals 
\[ \mathbb{P}[\LO(y) > \LO(x)] = \frac{(1-p)^j p}{1 - (1-p)^n}, \]
because all we have to change is to replace the binomial bit flip probabilities by those of the conditional distribution $\Bin_{>0}(n,p)$. This intuitive argument has been formally proven in~\cite[Theorem~3]{JansenZ11}, resulting in the claimed expected optimization time of the \oeares of
\begin{align}\label{eq:lo}
\mathbb{E}[T] =  \frac{1 - (1-p)^n}{2p^2} ((1-p)^{-n+1}-(1-p)) = \frac{(1 - (1-p)^n)^2}{2p^2(1-p)^{n-1}}.
\end{align}
 
For the \oeashift the probability of improvement at a given iteration starting in a random point $x$ with $\LO(x) = j$ is given by
\[ \mathbb{P}[\LO(y) > \LO(x)] = p(1-p)^j + \frac{1}{n}(1-p)^n, \]
yielding a total expected optimization time for \leadingones of 
\begin{align*}
T_{0 \to 1} 
&= \frac{1}{2}\sum\limits_{j=0}^{n-1}\frac{1}{p(1-p)^{n-j} + \frac{1}{n}(1-p)^n}.
\end{align*}

For $p=1/n$ the \oeares thus needs, on average, about $\frac{(e-1)^2}{2e}n^2 \approx 0.543 n^2$ function evaluations to optimize \leadingones. This value is just slightly above the expected optimization time $n^2/2$ of RLS. We also see that $\lim_{p \rightarrow 0}{\E[T_{>0,p}(\LO)]}=n^2/2$. As for \onemax the reason for this is quite simple: the smaller $p$, the more likely we are to sample $\ell=1$, thus resembling an RLS-iteration. 

The bound for the \oeashift is more difficult to interpret, but as in the case of \onemax the convergence to $n^2/2$ for $p \to 0$ is faster than that of the \oeares, cf. Table~\ref{tab:LOp}, which compares for $n = 1\,000$ the expected optimization times of the \oea, the \oeares, and the \oeashift for different values of $p$. 

\renewcommand{\arraystretch}{1.5}
\begin{table}[h!]
\begin{center}
\begin{tabular}{c|c|c|c}
\hline
$p$ 	& $1/n$ & $1/(10n)$ & $1/(100n)$\\
\hline
$\mathbb{E}[T_p]/n^2$ 
& 0.8589 & 5.2583 & 50.2506\\
$\mathbb{E}[T_{>0,p}]/n^2$ 
& 0.5431 & 0.5004 & 0.5000\\
$\mathbb{E}[T_{0\to 1,p}]/n^2$ 
& 0.5166 & 0.5000 & 0.5000\\
\hline
\end{tabular}
\end{center}
\caption{Comparison of the expected optimization times of the \oea, the \oeares, and the \oeashift with different values of $p$ on \leadingones of problem dimension $n=1\,000$. The poor performance of the \oea stems from the increasing number of iterations in which no bit is flipped at all.}
\label{tab:LOp}
\end{table}

For fixed $p=1/n$ the expected optimization time of the \oeashift converges to $0.5163 n^2$. Table~\ref{tab:LOn} compares the expected optimization times of the three algorithms for different values of $n$. 
\renewcommand{\arraystretch}{1.5}
\begin{table}
\begin{center}
\begin{tabular}{c|c|c|c|c|c|c}
\hline
$n$ 																& 10 & 100 & 1\,000 & 10\,000 & 100\,000 & 1\,000\,000 \\
\hline
$\E[T_{p=1/n}(\LO)]/n^2$ 
& 0.8405 & 0.8573 & 0.8589 & 0.8591 & 0.8591 & 0.8591 \\
$\E[T_{>0,p=1/n}(\LO)]/n^2$ 
& 0.5474 & 0.5435 & 0.5431 & 0.5430 & 0.5430 & 0.5430 \\
$\E[T_{0\to 1,p=1/n}(\LO)]/n^2$ 
& 0.5536 & 0.5197 & 0.5166 & 0.5163 & 0.5163 & 0.5163 \\
\hline
\end{tabular}
\end{center}
\caption{Comparison of the expected optimization times of the \oea, the \oeares, and the \oeashift with $p=1/n$ on \leadingones for different problem dimension $n$}
\label{tab:LOn}
\end{table}

\textbf{Experimental Results.} 
Figure~\ref{fig:oeaLO} shows for 100 independent runs on \leadingones the observed average runtimes of the three different algorithms with mutation rate $p=1/n$ for different problem sizes~$n$. While the \oeares and the \oeashift have a significantly better performance than the \oea already for small problem sizes ($38\%$ to $40\%$ for $n=600$), the difference between the two algorithms is much smaller than for \onemax. 
\begin{figure}[h!]
\begin{center}
\includegraphics[scale=0.2]{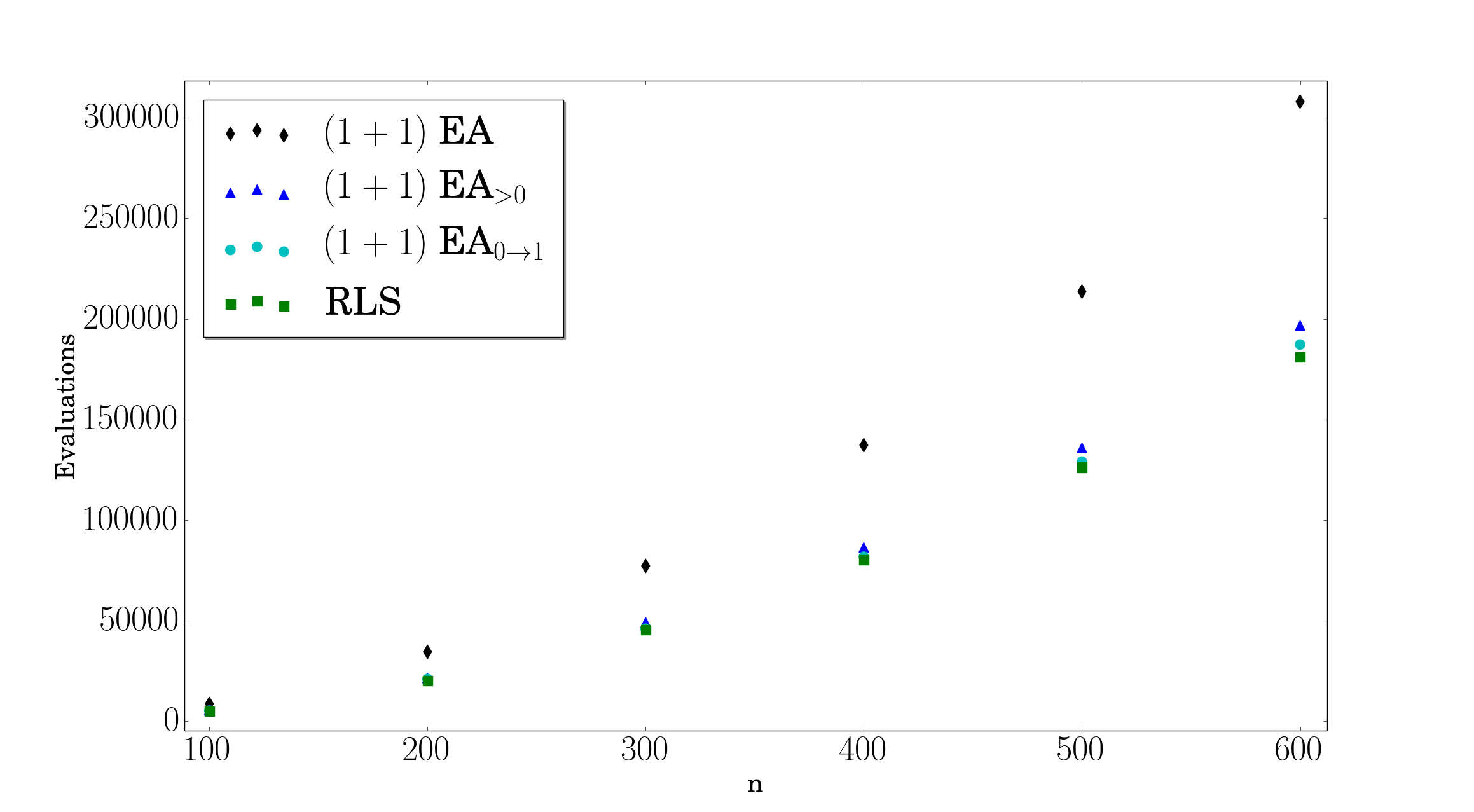}
\end{center}
\caption{Average runtimes for 100 independent runs of the respective algorithms (with mutation probability $p=1/n$ for the \oea and its variants) on \leadingones for different problem sizes~$n$.}
\label{fig:oeaLO}
\end{figure}
%

\subsection{The \texorpdfstring{\ga}{(1+(lambda,lambda))~GA} and the \texorpdfstring{\tpo}{Greedy (2+1)~GA}}
\label{sec:ga-implement}

In the previous section we have made the point that an algorithm should not be charged for iterations in which no bit is flipped. We now discuss that, more generally, one should not count function evaluations in which the sampled offspring equals one of its parents (assuming that this is easy to detect, of course). To this end, we will describe in this section reasonable implementations of the \ga presented in~\cite{DoerrDE15} 
and the \tpo from~\cite{Sudholt12}. 
As mentioned above, we will obtain a quite surprising result, namely that the \tpores, which we obtain from the \tpo by ignoring iterations in which the offspring equals one of its two parents, has an expected optimization time on \onemax that is strictly smaller than that of RLS, and, more than that, smaller than that of \emph{any} other unary unbiased black-box algorithm.  

\subsubsection{The \texorpdfstring{\ga}{(1+(lambda,lambda))~GA}}
\label{sec:ga}
The \ga 
is a theory-inspired EA which introduces a novel use of crossover as a \emph{repair mechanism} to discrete optimization~\cite{DoerrDE15}. It keeps in the memory a best so far solution $x$. Every iteration consists of a mutation and a crossover step. In the \emph{mutation step}, $\lambda$ offspring are created from~$x$. To ensure that all these offspring have the same distance from~$x$, the mutation strength $\ell$ is sampled from the binomial distribution $\Bin(n,p)$ and all offspring are created by the variation operator $\mut_{\ell}$. In the \emph{crossover phase}, the best of these offspring $x'$ (ties broken uniformly at random) is then recombined, in $\lambda$ independent trials, with $x$, using a biased crossover $\cross_c(\cdot,\cdot)$ which, independently for every position $i \in [n]$, chooses the entry from the second argument with probability $c$ and chooses the entry of the first argument otherwise. 
The best of these $\lambda$ recombined offspring (ties again broken uniformly at random), $y$, replaces $x$ if it is at least as good, i.e., if $f(y) \geq f(x)$. 

It was shown in~\cite{DoerrD15tight,Doerr16} that for suitably chosen parameters $\lambda, p, c$ the \ga has an $\Theta(n \sqrt{\log(n) \log\log\log(n) / \log\log(n)})$ expected runtime on \onemax,\footnote{Note that here and for all other results stated in this work we count the number of function evaluations, not the number of generations.} thus beating the $\Omega(n \log n)$ bound valid for any mutation-only algorithm~\cite{LehreW12,DoerrDY16}. The expected performance of the \ga can be further improved to linear by using non-static parameters, cf.~\cite{DoerrDE15,DoerrD15self}. 

From the above description we see that one iteration of the \ga costs $2 \lambda$ function evaluations, as we have to evaluate the $\lambda$ offspring created in the mutation phase and the $\lambda$ offspring created in the crossover phase. When $\ell=0$, all these $2 \lambda$ offspring equal $x$, and thus cause $2 \lambda$ useless function evaluations. Our first suggestion is therefore to sample, as in the \oeares, the mutation strength $\ell$ from the conditional binomial distribution $\Bin_{>0}(n,p)$. 
With the recommended parameter setting\footnote{Intuitively, the recommendation to use $c=1/\lambda$ stems from the observation that a random offspring in the mutation phase has an expected distance of $pn=\lambda$ to $x$. Using $c=1/\lambda$ therefore corresponds to having (roughly, since we have an intermediate selection step) an expected Hamming distance of 1 between $x$ and $y$.} $p=\lambda/n$ and $c=1/\lambda$ the probability to sample $\ell=0$ in the unconditional binomial distribution is $(1-p)^n = (1-\lambda/n)^n \approx \exp(-\lambda)$. Sampling from the conditional distribution thus saves us an expected number of $2\exp(-\lambda)\lambda$ function evaluations per iteration. 
Our second suggestion concerns the crossover phase. Depending on $\lambda$, whose optimal value approaches $\sqrt{n}$ as fitness increases, the probability $c=1/\lambda$ to take an entry from $x'$ can be quite small. It is therefore not unlikely that an offspring created in the crossover phase equals one of its two parents, in particular the original parent $x$. Since this equality can be easily checked, we suggest not to evaluate such offspring. Finally, when the winner $x'$ of the mutation phase is better than that of the crossover phase (i.e., if $f(x')>f(y)$), we suggest to replace $x$ by $x'$ if $f(x') \geq f(x)$. 


That our suggested changes are indeed practice-driven can be seen by looking at the implementation of the \ga reported in~\cite{GoldmanP15}, which is available on GitHub~\cite{GoldmanGITHUB}. Indeed, all of the suggested changes have been implemented there. Our \gaopt ignores, however, some additional problem-driven changes made in~\cite{GoldmanP15}. 

For the \ga on \onemax only asymptotic runtime bounds are available~\cite{DoerrD15tight,Doerr16,DoerrDE15}. 
We can therefore at the moment not compute the optimal parameter values of the \gaopt. For our experiments we use the self-adjusting choice of $\lambda$ proposed and analyzed in~\cite{DoerrD15self}, both for the \ga as well as for the \gaopt. 
This self-adjusting choice yields linear expected runtime on \onemax and works as follows. In the beginning, $\lambda$ is initialized as $\lambda=1$. At the end of each iteration, it is checked if the iteration was successful. If so, i.e., if $f(y)>f(x)$, then $\lambda$ is decreased to $\lambda/F$, and it is increased to $\lambda\cdot F^{1/4}$ otherwise. For our experiments we use $F=1.5$. With this self-adjusting rule the \gaopt becomes Algorithm~\ref{alg:GAoptself}.  

\begin{algorithm2e}[t]%
	\textbf{Initialization:} 
	Choose $x \in \{0,1\}^n$ u.a.r. and evaluate $f(x)$\;
	Initialize $\lambda \assign 1$\;
 \textbf{Optimization:}
\For{$t=1,2,3,\ldots$}{
\underline{\textbf{Mutation phase:}}\\
\Indp
Sample $\ell$ from $\Bin_{>0}(n,p)$\;
\lFor{$i\in [\lambda]$}{$x^{(i)} \assign \mut_{\ell}(x)$; Evaluate $f(x^{(i)})$}
Choose $x' \in \{x^{(i)}\mid i \in [\lambda]\}$ with $f(x')=\max\{f(x^{(i)})\mid i \in [\lambda]\}$ u.a.r.\;
\Indm
\underline{\textbf{Crossover phase:}}\\
\Indp
\lFor{$i\in [\lambda]$}{$y^{(i)} \assign \cross_{c}(x,x')$; \textbf{if} {$y^{(i)} \notin \{ x,x'\}$} \textbf{then} {evaluate $f(y^{(i)})$}}
Choose $y \in \{y^{(i)}\mid i \in [\lambda]\} \cup \{x'\}$ with $f(y)=\max\left(\{f(y^{(i)})\mid i \in [\lambda]\} \cup \{f(x')\} \right)$ u.a.r.\;
\Indm
\underline{\textbf{Selection and update step:}}\\
\Indp
\lIf{$f(y)>f(x)$}{
$x \assign y$; $\lambda \assign \max\{\lambda/F,1\}$}
\lIf{$f(y)=f(x)$}{
$x \assign y$; $\lambda \assign \min\{\lambda F^{1/4},n\}$}
\lIf{$f(y)<f(x)$}{$\lambda \assign \min\{\lambda F^{1/4},n\}$}
\Indm 
}
\caption{The self-adjusting \gaopt, maximizing $f:\{0,1\}^n \to \R$, with offspring population size~$\lambda$, mutation probability~$p$, crossover bias~$c$, and update strength~$F$. 
}
\label{alg:GAoptself}
\end{algorithm2e}

\subsubsection{The \texorpdfstring{\tpo}{Greedy (2+1)~GA}}
\label{sec:tpo}

As in~\cite{DoerrDE15}, we compare the \ga and the \gaopt with the \tpo from Sudholt~\cite{Sudholt12}. 
The \tpo, or more generally, the greedy $(\mu+1)$~GA presented in~\cite{Sudholt12} maintains a population $\Pe$ of $\mu$ individuals. $\Pe$ is initialized by sampling $\mu$ search points independently and uniformly at random. Each iteration consists of two steps, a crossover step and a mutation step. In the crossover step two parents $x,y$ are selected uniformly at random (with replacement) from those individuals $u \in \Pe$ for which $f(u) = \max_{v \in \Pe}{f(v)}$ holds. Note that if there is only one such search point, then this one is selected twice. From these two search points an offspring $z'$ is created by uniform crossover $\cross_{1/2}(x,y)$. This offspring $z'$ is then mutated by standard bit mutation, i.e., each bit is flipped independently with some probability $p \in [0,1]$. The so-created offspring $z$ is evaluated. If $z \notin \Pe$ and its fitness is at least as good as $\min_{v \in \Pe} f(v)$, it replaces the worst individual in the population, ties broken uniformly at random. The requirement $z \notin \Pe$ is a so-called \emph{diversity mechanism}. 

It is not difficult to see that from the whole population only those with a best-so-far fitness value are relevant, the others are never selected for reproduction. Furthermore we see that in the case that $\mu=2$, even if there are two different individuals $x \neq y$ in the population $\Pe$, the probability to select both of them for reproduction is only $1/2$. In all other cases the crossover phase just reproduces one of the two parents. We change this in our implementation and enforce that in the crossover phase both parents are selected if they have an equal fitness value. As we shall discuss below, this change does not influence our upper bounds much (it changes the constant in the linear term of the overall $\Theta(n \ln n)$ expected runtime, but does not affect the leading constant of the $n \ln(n)$ term), but we believe that in particular for $\mu=2$ this variant is more ``natural''. 
Sudholt showed for his \tpo that its expected optimization time on \onemax is at most 
\begin{align*}
\frac{\ln(n^2p+n)+1+p}{p(1-p)^{n-1}(1+np)}+\frac{8n}{(1-p)^n}.
\end{align*} 
It is very easy to modify his proof to show that the expected optimization time of the \tpo with the new parental selection is at most
\begin{align*}
\frac{\ln(n^2p+n)+1+p}{p(1-p)^{n-1}(1+np)}+\frac{4n}{(1-p)^n}, 
\end{align*} 
i.e., an additive term of $\frac{4n}{(1-p)^n}$ smaller than the original \tpo.

We now modify the \tpo in a similar way as we did for the \ga, cf. Algorithm~\ref{alg:tpores}. 
Our first modification is that we do not evaluate $z$ if it equals one of its parents $x$ or $y$. 
Our second modification concerns the mutation phase. When $z' \in \{x,y\}$, we enforce a mutation strength greater than $0$ by sampling from the conditional distribution $\Bin_{>0}(n,p)$. Note that $z' \in \{x,y\}$ holds when the crossover did not happen (i.e., if $f(x)>f(y)$ in line~\ref{line:xoverselect} of Algorithm~\ref{alg:tpores}) or when the random choices of the crossover resulted in a string that equals one of its two parents. If we denote by $d$ the Hamming distance of $x$ and $y$ this latter event happens with probability $1/2^{d-1}$. The situation $d=H(x,y)=2$ occurs quite frequently, resulting in a $1/2$ probability that the crossover reproduces one of the two parents. 
 
\begin{algorithm2e}[t]%
	Choose $x$ and $y$ from $\{0,1\}^n$ independently and u.a.r. and evaluate $f(x),f(y)$\;
\For{$t=1,2,3,\ldots$}{
		Ensure $f(x) \geq f(y)$ by renaming them if needed\;
		\lIf{$f(x) = f(y)$\label{line:xoverselect}} 
			{$z' \assign \cross_{1/2}(x,y)$;
			\textbf{else} {$z' \assign x$}}
		\lIf{$z' \notin \{x,y\}$} 
				{Sample $\ell$ from $\Bin(n,p)$; 
				\textbf{else} {Sample $\ell$ from $\Bin_{>0}(n,p)$}}
		$z \assign \mut_{\ell}(z')$\; 
		\If{\label{line:diversity}$z \notin \{x,y\}$}{
			evaluate $f(z)$\;
				\lIf{$f(z)\geq f(y)$ and $f(y)<f(x)$}{$y \assign z$}
				\lIf{$f(z)\geq f(y)=f(x)$}{replace either $y$ or $x$ by $z$, chosen u.a.r.}
				}
}
\caption{The \tpores with mutation probability $p$ maximizing a given function $f : \{0,1\}^n \to \R$}
\label{alg:tpores}
\end{algorithm2e}

Following very closely the proof of Theorem~2 in~\cite{Sudholt13}, it is not difficult to obtain the following runtime statement.
\begin{theorem}
\label{thm:tpo}
The expected optimization time of the \tpores with mutation rate $p$ is at most 
$
\frac{(1-(1-p)^n)(\ln(n^2p+n)+1+p)}{p(1-p)^{n-1}(1+np)}+\frac{4n}{(1-p)^n}.
$
For $p=c/n$ the expected optimization time is thus at most
\begin{align}
\label{eq:tpomutrate}
\frac{(1-(1-c/n)^n)}{c(1-c/n)^{n-1}(1+c)} n \ln(n)+\Theta(n).
\end{align}
\end{theorem}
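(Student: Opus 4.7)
The plan is to adapt the proof of Sudholt's Theorem~2 in~\cite{Sudholt13} essentially verbatim, carefully tracking two modifications: (i) the enforced dual parent selection at equal fitness, and (ii) the use of $\Bin_{>0}(n,p)$ in the mutation step whenever $z' \in \{x,y\}$. The structure of the argument is a fitness level decomposition: the total optimization time is written as $T_1 + T_2$, where $T_1$ sums, over all fitness levels $i$, an upper bound on the expected number of iterations spent while the current-best fitness equals $i$ (conditional on both population members being at level $i$), and $T_2$ bounds the additional time needed for the less-fit parent to ``catch up'' after an improving step, so that diversity is restored. Modification (i) is exactly the reason why the additive $8n/(1-p)^n$ term in Sudholt's original bound is halved to $4n/(1-p)^n$, as already noted in the paragraph preceding Algorithm~\ref{alg:tpores}; this gives the second summand of the claimed bound.

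For $T_1$, I would first reproduce Sudholt's per-level improvement lower bound. At any fitness level, an iteration generates an improvement by one of two types of events: either $z' \notin \{x,y\}$ (crossover already shifts mass, and standard mutation operates on it), or $z' \in \{x,y\}$ (no genuinely new crossover offspring). In Sudholt's original argument, the latter case accounts for the overwhelming share of improvement events when either the population is not yet diverse or the Hamming distance of $x,y$ is small, and in this case the improvement probability is essentially $p(1-p)^{n-1}$ times the number of zero-bits that still need to be flipped. Our modification replaces $\Bin(n,p)$ by $\Bin_{>0}(n,p)$ exactly in this regime, which multiplies every such improvement probability by the uniform factor $1/(1-(1-p)^n)$. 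Since Sudholt's per-level probability bound lower-bounds improvement only through this regime, multiplying the bound by $1/(1-(1-p)^n)$ remains valid; equivalently, each expected waiting time is scaled by $1-(1-p)^n$. Summing Sudholt's waiting times across fitness levels (which is what yields the factor $\ln(n^2p+n)+1+p$ in the numerator and $p(1-p)^{n-1}(1+np)$ in the denominator) then gives
\[
T_1 \;\le\; \frac{(1-(1-p)^n)(\ln(n^2 p+n)+1+p)}{p(1-p)^{n-1}(1+np)},
\]
which, combined with the halved $T_2$ bound, yields the first displayed formula.

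The asymptotic form for $p=c/n$ is pure substitution: the main term becomes
\[
\frac{(1-(1-c/n)^n)(\ln(cn+n)+1+c/n)}{(c/n)(1-c/n)^{n-1}(1+c)} \;=\; \frac{1-(1-c/n)^n}{c(1-c/n)^{n-1}(1+c)}\, n\ln n \;+\; O(n),
\]
since $\ln(cn+n)=\ln n+O(1)$ and the prefactor $\frac{1-(1-c/n)^n}{c(1-c/n)^{n-1}(1+c)}$ is $\Theta(1)$, so the $O(1)$ additive contribution from $\ln(1+c)+1+c/n$ contributes only $\Theta(n)$. The additive term $4n/(1-c/n)^n$ from $T_2$ is also $\Theta(n)$ and gets absorbed.

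The delicate step, and therefore the main obstacle, is the first paragraph's argument that Sudholt's per-level improvement probability can really be multiplied uniformly by $1/(1-(1-p)^n)$. One has to verify that the sub-events Sudholt uses to lower-bound per-level improvement fall entirely within the regime $z' \in \{x,y\}$ (so that $\Bin_{>0}$ is applied) or, when they do not, that the unmodified estimates are still at least as good as what Sudholt used. This amounts to a case analysis over the Hamming distance $d = H(x,y)$: for $d=0$ and for $d$ small enough that crossover reproduces a parent with substantial probability $2^{-(d-1)}$, the boost applies directly; for larger $d$, the catch-up argument packaged in $T_2$ already ensures that such configurations are transient, so the $T_1$ bound only needs to hold in the regimes Sudholt considered. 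Once this case analysis is in place, everything else is exactly as in~\cite{Sudholt13}.
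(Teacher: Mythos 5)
Your proposal follows essentially the same route as the paper: adapt Sudholt's fitness-level proof, multiply the mutation-driven improvement probabilities by $1/(1-(1-p)^n)$, and halve the crossover term because the two distinct best parents are now always both selected; the quantitative conclusions and the $p=c/n$ substitution are correct. However, your description of the decomposition swaps the roles of the two summands, and this matters precisely for the ``delicate step'' you flag at the end. In the paper's proof the main ($\ln$) term bounds the time spent on a level $i$ in states where the population is effectively a single point (i.e., $f(x)>f(y)$ or $x=y$); there $z'=x$ holds \emph{deterministically}, so the mutation strength is always drawn from $\Bin_{>0}(n,p)$ and the factor $1/(1-(1-p)^n)$ applies to every improvement event with no case analysis over the Hamming distance whatsoever. (Restoring diversity is one of the two exit events folded into that same term, via the probability $i(n-i)p^2(1-p)^{n-2}/(1-(1-p)^n)$ of creating a second distinct individual of equal fitness.) The $4n/(1-p)^n$ term is therefore not a ``catch-up'' time for the weaker parent; it bounds the time spent in the diverse state $f(x)=f(y)$, $x\neq y$, where the improvement mechanism is entirely different: uniform crossover places strictly more than half of the ones into the $2d$ differing positions with probability at least $1/4$, and then zero bits are flipped with probability $(1-p)^n$ under the \emph{unconditional} $\Bin(n,p)$, so no boost is needed or used there, and the diversity mechanism guarantees the non-diverse case is never re-entered on the same level. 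With this case split, the obstacle you identify --- whether Sudholt's sub-events lie in the $z'\in\{x,y\}$ regime --- dissolves, and in particular your claim that the boost ``applies directly'' for small positive $d$ would actually be wrong as stated (for $d\ge 1$ the conditional distribution is used only with probability $2^{-(2d-1)}$ over the crossover outcome), which is exactly why the paper does not argue that way.
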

\begin{proof}
Following~\cite{Sudholt12}, we say that the algorithm is on fitness level $i$ if the best individual in the population has fitness value $i$. Like Sudholt, we distinguish two cases. 

\textbf{Case $i$.1: $i=f(x)$ and either $f(x)>f(y)$ or $x=y$.} In this situation there is no crossover. The offspring $z$ is the outcome of standard bit mutation on $x$. The algorithm leaves this situation when (a) $f(z)>i$ or (b) $f(z)=f(x)$ and $z \neq x$. 
The probability for (a) to happen is at least $(n-i)p(1-p)^{n-1}/(1-(1-p)^n)$, since this is the probability that exactly one of the zero bits is flipped in the mutation phase. Likewise, the probability of event (b) is $i (n-i) p^2 (1-p)^{n-2}/(1-(1-p)^n)$.
Once the algorithm has left case i.1 it does never return to it. This is ensured by the diversity mechanism, which allows to include $z$ in the population only if it isn't yet (line~\ref{line:diversity} of Algorithm~\ref{alg:tpores}). The total expected time spent in the cases $i$.1, $i=0,\ldots, n-1$ is therefore at most 
\begin{align*}
	\frac{1-(1-p)^n}{p(1-p)^{n-1}} \sum_{i=0}^{n-1}{\frac{1}{(n-i)(1+ip)}}.
\end{align*}
The same algebraic computations as in~\cite{Sudholt12} show that this expression can be bounded from above by
\begin{align*}
	\frac{(1-(1-p)^n) (\ln(pn^2+n)+1+p)}{p(1-p)^{n-1}(1+np)}.
\end{align*}

\textbf{Case $i$.2: $i=f(x)=f(y)$ and $x \neq y$.} In this case the Hamming distance of $x$ and $y$ is even. Let $X$ denote the number of ones in the intermediate offspring $z'$ in those $2d$ positions in which $x$ and $y$ differ. $X$ is binomially distributed with $2d$ trials and success probability $1/2$. It is not difficult to show that $\Pr[X>d] \geq 1/4$. When $X>d$, then $z' \notin \{x,y\}$ and the mutation strength is therefore sampled from the binomial distribution $\Bin(n,p)$. The probability to sample a zero is $(1-p)^n$. Thus, the probability to leave fitness level $i$ is at least $(1-p)^n/4$ and the total expected time spent in the cases $i$.2, $i=0,\ldots, n-1$ is at most $4n/(1-p)^n$.
\end{proof}
%
For large $n$, we can approximate expression~\eqref{eq:tpomutrate} by $\frac{(1-\exp(-c))}{c\exp(-c)(1+c)} n \ln(n)+\Theta(n)$, which is minimized for $c \approx 0.773581$, yielding an expected optimization time of approximately $(1+o(1)) 0.850953 n \ln(n)$ for the \tpores, cf. Table~\ref{tab:ctpo} for the values of $c$ minimizing $\frac{1-(1-c/n)^n}{c(1-c/n)^{n-1}(1+c)}$ for different values of $n$. For comparison, the expected optimization time of RLS is $(1\pm o(1)) n \ln(n)$, and so is the expected optimization time of the best possible unary unbiased black-box algorithm~\cite{DoerrDY16}.\footnote{In intuitive terms, the class of unary unbiased black-box algorithms, introduced in~\cite{LehreW12}, contains all mutation-based black-box algorithm.} 
This is a quite remarkable result, as it seems to be the first time that a ``classic'' GA is shown to outperform RLS on \onemax.

\renewcommand{\arraystretch}{1.5}
\begin{table}[t]
\begin{center}
\begin{tabular}{c|c|c|c|c|c}
\hline
$n$ 																			& 10			 & 100 & 500 & $1\,000$ & $5\,000$\\
\hline
$c$ 														 					& 0.783953  & 0.774577   & 0.773778   & 0.773679 & 0.773599\\
$\frac{1-(1-c/n)^n}{c(1-c/n)^{n-1}(1+c)}$	& 0.831839  & 0.859091   & 0.850581   & 0.850766 & 0.850915\\
\hline
\end{tabular}
\end{center}
\caption{Values of $c$ minimizing expression~\eqref{eq:tpomutrate} for different values of $n$}
\label{tab:ctpo}
\end{table}

It is beyond the scope of this work to analyze the tightness of the upper bounds proven in Theorem~\ref{thm:tpo}, and additional gains may be possible by choosing different values for $p$. Our empirical results suggest that the upper bound of Theorem~\ref{thm:tpo} is indeed rather weak. We also remark that \RLSopt (described below), the RLS-variant from~\cite{DoerrDY16} achieving the (up to lower order term) optimal run time among all unary unbiased black-box algorithms on \onemax, uses fitness-dependent mutation rates. It is possible (and likely) that the \tpores, as well, could profit further from choosing the mutation rate in such an adaptive way. We have to leave this for future work. 

\RLSopt is essentially RLS, with the only difference that in the mutation step, instead of flipping always one random bit, more than one bit can be flipped. Intuitively, the optimal number $\ell^*_{v}$ of bits to flip depends only on the fitness value $v=\OM(x)$ of the current-best individual $x$ and should be the one that maximizes the expected progress $\E[ \max\{\OM(y)-\OM(x), 0\} \mid \OM(x)=v, y = \mut_{\ell}(x)]$. That this \emph{drift maximizer} is indeed (at least up to the mentioned additive $O(n)$ term) optimal has been formally proven in~\cite{DoerrDY16}. More precisely, it is shown that the expected runtime of \RLSopt on
\OneMax and the unary unbiased black-box complexity of
\OneMax both are $n \ln(n)- cn \pm o(n)$ for a constant $c$ between
0.2539 and 0.2665. 
To run \RLSopt in our experiments, we have computed, for every tested $n$ and every fitness value $v \in [0..n-1]$ the value $\ell^*_{v}$ that maximizes the expected drift 
\begin{align}
\label{eq:RLSdrift}
	\nonumber B(n,v,\ell) & :=
	\E[ \max\{\OM(y)-\OM(x), 0\} \mid \OM(x)=v, y = \mut_{\ell}(x)] \\
	& = 
	\sum_{i=\lceil \ell/2 \rceil}^{\ell}
	\frac{\binom{n-v}{i}\binom{v}{\ell-i}\left(2i-\ell\right)}{\binom{n}{\ell}},
\end{align}
i.e., we do not work with the approximation proposed in~\cite{DoerrDY16} but the original drift maximizer.

\subsubsection{Experimental Results}
\begin{figure}[t]
\begin{center}
\includegraphics[scale=0.2]{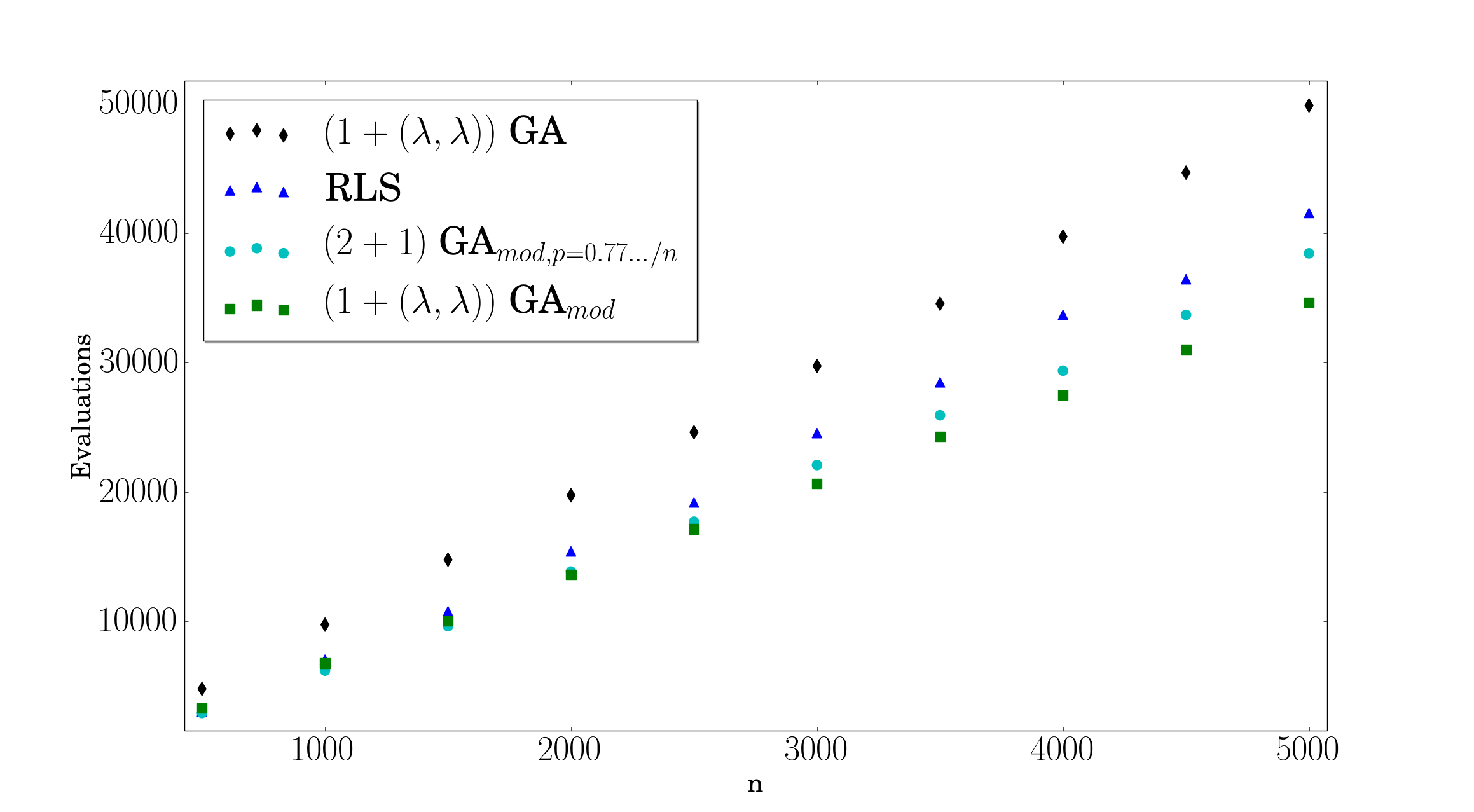}
\end{center}
\caption{Average runtimes for 100 independent runs of the respective algorithms on \onemax for different problem sizes~$n$.}
\label{fig:ga}
\end{figure}
Figure~\ref{fig:ga} shows experimental data for the performance of the mentioned algorithms on \onemax, for $n$ ranging from $100$ to $5\,000$. The \ga and the \gaopt use the self-adjusting $\lambda$ values, while for 
the \tpores we use mutation rate $0.773581/n$. In the reported ranges, the expected performance of the \tpo with mutation rate $p=(1+\sqrt{5})/(2n)$ is very similar to that of the self-adjusting \ga (cf. Figure~8 in~\cite{DoerrDE15}); we do not plot these data points to avoid an overloaded plot. Detailed statistical information for these data points is given in Table~\ref{tab:ga}. We observe that both the \gaopt as well as the \tpores are better than RLS already for quite small problem sizes. We also observe that, in line with the theoretical bounds, the advantage of the \gaopt over the \tpores and over RLS increases with the problem size. 

\section{Runtime Profiles}
\label{sec:profiles}

Most runtime results in discrete EC are statements about \emph{first hitting times}, understood as the time needed by an algorithm until it evaluates for the first time an optimal solution of the underlying problem. In particular the expected value of this random variable is studied. However, in almost all practical applications, the user does not know when the algorithm is ``done''. And even if this could be detected, it may take too long for this event to happen. It is therefore highly relevant to understand how the algorithms perform over time. Jansen and Zarges~\cite{JansenZ14}, for this reason, suggested to adopt a \emph{fixed-budget perspective}, analyzing the expected fitness value that an algorithm has achieved after a fixed number of iterations. Here in this work we suggest a complementary view. 

Instead of reporting only the expected time needed to hit, for the first time, an optimal solution, we suggest to include in the runtime statements the expected time needed to hit intermediate fitness values. When canonical fitness levels exits, such as in the case of \onemax, \leadingones, royal road, and several other functions, we suggest to use these. For other functions, such as linear functions or weighted combinatorial graph problems, the analysis of the expected optimization time often identifies useful to report target fitness levels. In the absence of these, a linear interpolation of the minimal and maximal fitness value could be used. We call such statements \emph{runtime profiles.} We emphasize the fact that runtime profiles have, very naturally, been reported in many empirical works on heuristic optimization. We are thus not suggesting any new concept here. Our intention is rather to highlight to researchers working on theoretical aspects in evolutionary computation that, beyond being possibly more relevant for practitioners, explicitly reporting runtime profiles can give quite interesting insights into the performance of EAs. 

\begin{figure}[t]
\begin{center}
\includegraphics[scale=0.2]{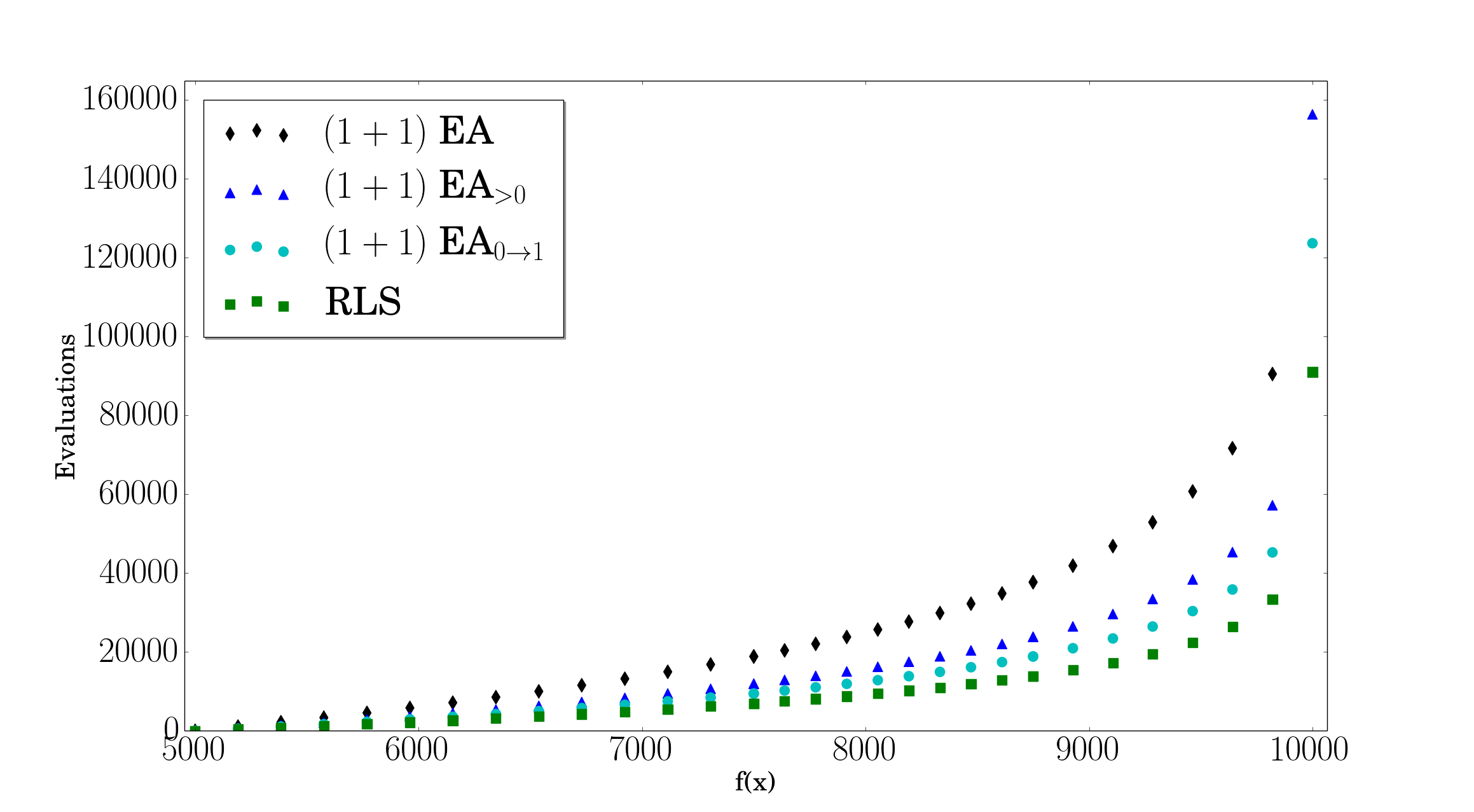}
\end{center}
\caption{Runtime profiles on \onemax for problem size~$n=10\,000$ according to the bounds computed in Theorem~\ref{thm:profileOM} but assuming to start in a search point of fitness value $n/2$.}
\label{fig:theoOMprofile}
\end{figure}

Note that, in contrast to the previous section, our suggestion does not \emph{change} any of the algorithms nor the runtime bounds. We merely suggest to report them in a different way. All bounds reported below can be easily obtained from previous works and are more or less explicit in previous proofs.
%

\begin{theorem}
\label{thm:profileOM}
Let $p \in (0,1)$ and $k \in [n]$. Starting in the all-zeros string $x=(0,\ldots,0)$ the expected time needed to reach for the first time a search point $x$ of \onemax-value at least $k$ is at most 
\begin{itemize}
	\item $n(H_n-H_{n-k})$ for RLS,
	\item $\frac{1}{p(1-p)^{n-1}} (H_n - H_{n-k})$ for the \oea with mutation probability $p$, 
	\item $\frac{1-(1-p)^n}{p(1-p)^{n-1}}(H_n - H_{n-k})$ for the \oeares with mutation probability $p$, and
	\item $\frac{1}{(1-p)^{n-1}(p+1/n-p/n)}(H_n - H_{n-k})$ for the \oeashift with mutation probability $p$. 
\end{itemize}
\end{theorem}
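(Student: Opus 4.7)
The plan is to mimic the fitness-level arguments already used in the proof of Theorem~\ref{thm:oeaOM}, with only two minor modifications: we truncate the sum of waiting times at level $k-1$ instead of $n-1$, and we use the fact that starting from the all-zeros string guarantees that we begin in $L_0$, so no initial fitness needs to be bounded probabilistically. Concretely, I would use the canonical fitness-level partition $L_0,L_1,\ldots,L_n$ with $L_i := \{x \in \{0,1\}^n \mid \OM(x)=i\}$. All four algorithms are elitist and can only strictly improve their $\OM$-value, so they traverse these levels in nondecreasing order. If $p_i$ denotes a lower bound on the probability to leave $L_i$ in one iteration, the expected time to first reach fitness at least $k$, starting from $L_0$, is at most $\sum_{i=0}^{k-1} 1/p_i$.

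For each of the four algorithms I would import the corresponding lower bound on $p_i$: for RLS, $p_i \ge (n-i)/n$; for the \oea, $p_i \ge (n-i)p(1-p)^{n-1}$; for the \oeares, $p_i \ge (n-i)p(1-p)^{n-1}/(1-(1-p)^n)$ (from the proof of Theorem~\ref{thm:oeaOM}); and for the \oeashift, $p_i \ge (1-p)^{n-1}(np+1-p)(n-i)/n$ (also from the proof of Theorem~\ref{thm:oeaOM}). Each of these bounds factors as (something independent of $i$) times $(n-i)$ (or $(n-i)/n$), so the sum of waiting times becomes a constant prefactor times $\sum_{i=0}^{k-1} 1/(n-i)$.

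The remaining observation is the identity
\begin{align*}
\sum_{i=0}^{k-1} \frac{1}{n-i} \;=\; \sum_{j=n-k+1}^{n} \frac{1}{j} \;=\; H_n - H_{n-k},
\end{align*}
which yields exactly the four claimed expressions once we plug in the four values of $p_i$ above. For the \oeashift line, I would also use $np+1-p = n(p + 1/n - p/n)$ to put the bound in the form stated in the theorem.

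There is no real obstacle: the arithmetic is the same as in Theorem~\ref{thm:oeaOM}, and the only conceptual point is that the all-zeros starting point deterministically places the algorithm in $L_0$, which is exactly what makes the naive summation $\sum_{i=0}^{k-1} 1/p_i$ a valid upper bound without any extra conditioning. One could, at minor extra cost, extend the statement to arbitrary starting points of $\OM$-value at most $n-k$ by summing only over the unvisited levels below $k$, but since the theorem is stated only for the all-zeros start, no such generalization is needed.
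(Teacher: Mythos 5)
Your proposal is correct and follows exactly the route the paper intends: the paper gives no separate proof of Theorem~\ref{thm:profileOM} but states that the bounds follow from the fitness-level arguments of Theorem~\ref{thm:oeaOM}, which is precisely your truncation of the waiting-time sum at level $k-1$ together with the identity $\sum_{i=0}^{k-1}\frac{1}{n-i}=H_n-H_{n-k}$. All four per-level probability bounds and the final algebra (including $np+1-p=n(p+1/n-p/n)$ for the \oeashift) check out.
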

For $p=1/n$ these bounds are $(1\pm o(1)) C n (H_n - H_{n-k})$, where 
$C=e$ for the \oea, 
$C=e-1$ for the \oeares, and 
$C=e/(2-\frac{1}{n})$ for the \oeashift. 

Note that the theorem bounds the time needed to reach fitness level $k$ when starting in a search point of fitness 0. This is a very pessimistic view. In a typical run of these algorithms, already the first search point has an expected fitness of roughly $n/2$ and the probability that it is less than $\frac{n}{2}-\sqrt{n}$ is $o(1)$. In consequence, the empirically observed time needed to reach fitness value $k>n/2$ equals roughly the above-stated bounds minus the time needed to reach fitness level $n/2$, cf. also~\cite{DoerrD16RLS} where it is formally proven that for RLS the total expected optimization time is almost identical to that deterministically starting in a search point of fitness $n/2$. In Figure~\ref{fig:theoOMprofile} we plots the computed runtime profiles for $n=10\,000$, where we assume to start in a search point of fitness $n/2$, i.e., we subtract from the bounds in Theorem~\ref{thm:profileOM} the time needed to reach fitness level $n/2$. We see that the performance of the \oeares, the \oeashift and RLS are quite close for most of the intermediate levels, much closer than what the total optimization time might suggest. 

\begin{figure}[t]
\begin{center}
\includegraphics[scale=0.2]{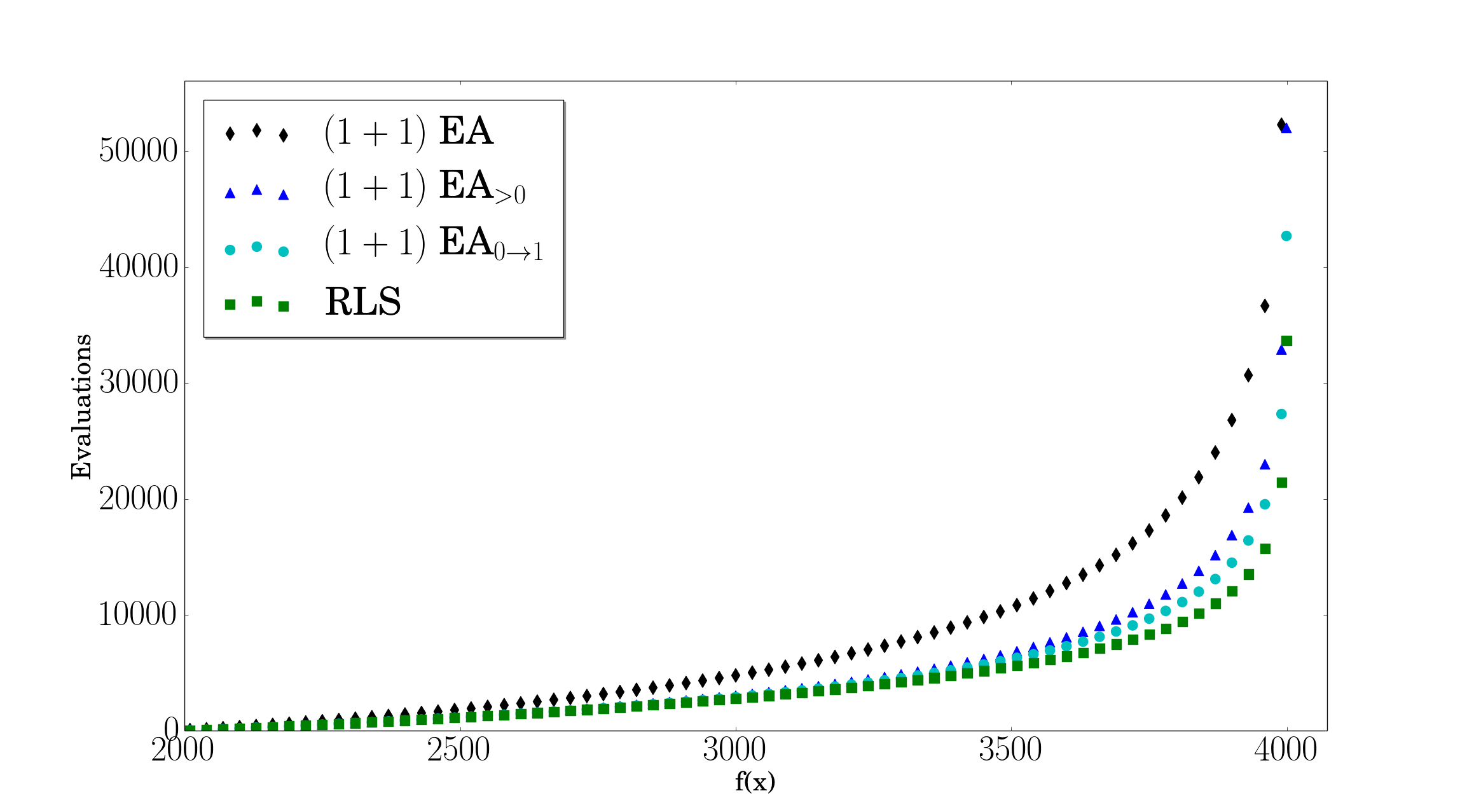}
\end{center}
\caption{Runtime profile for 100 independent runs of the respective algorithms on \onemax for problem size~$n=4\,000$.}
\label{fig:OMprofile}
\end{figure}
In Figure~\ref{fig:OMprofile} we plot the empirical runtime profiles for 100 independent runs of the different algorithms (with mutation rate $p=1/n$ for the \oea-variants) on \onemax with $n=4\,000$. The behavior is much similar to our predicted one from Figure~\ref{fig:theoOMprofile}. In particular, we observe that for all target values $i$, the \oea needs longest, on average, to reach this fitness level. We also easily see from this plot that all four algorithms easily make progress in the beginning. The waiting time for fitness improvement increases with increasing fitness values. While the expected optimization times of RLS, the \oeares with mutation rate $p=1/n$, and the \oeashift with the same mutation rate differs significantly for $n=4\,000$, the expected time to reach the intermediate fitness values is not that diverse for all but the last $10\%$ of the target fitness values.

While for \onemax RLS seems to be consistently better for all relevant intermediate fitness levels, the situation for \leadingones is quite different. 
\begin{theorem}
\label{thm:profileLO}
Let $p \in (0,1)$ and $k \in [n]$. The expected time needed to reach for the first time a search point $x$ of \leadingones-value at least $k$ is at most 
\begin{itemize}
	\item $kn/2$ for RLS,
	\item $\frac{1}{2p^2}((1-p)^{1-k} - (1-p))$ for the \oea with mutation probability $p$, 
	\item $\frac{1-(1-p)^n}{2p^2}((1-p)^{1-k} - (1-p))$ for the \oeares with mutation probability $p$, 
	\item $\frac{1}{2} \sum\limits_{i=n-k+1}^{n} \frac{1}{p(1-p)^{n-i} + \frac{1}{n}(1-p)^n}$ for the \oeashift with mutation probability $p$.
\end{itemize}
\end{theorem}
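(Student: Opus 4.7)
The plan is to extend the fitness-level framework of B\"ottcher, Doerr, and Neumann~\cite{BottcherDN10} (recalled as Theorems~\ref{thm:LO1} and~\ref{thm:LO2}) from the global optimum to the intermediate target~$k$. All four algorithms are \oea-variants in the sense of that framework, so its machinery applies directly; the only change is that the sum over fitness levels is truncated at $k-1$ in place of $n-1$.

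For each algorithm I would first compute the per-step improvement probability $p_j := \mathbb{P}[\LO(y) > \LO(x) \mid \LO(x) = j]$, giving the single-improvement waiting time $\E[A_j] = 1/p_j$. For RLS, exactly the $(j+1)$-st bit must flip, so $p_j = 1/n$. For the \oea, the standard argument yields $p_j = p(1-p)^j$. For the \oeares one simply re-normalises by $1 - (1-p)^n$, as already done in the proof of Theorem~\ref{thm:LO}, to obtain $p_j = p(1-p)^j/(1-(1-p)^n)$. For the \oeashift I would sum, over $\ell \in \{1,\dots,n\}$, the product of $\Pr[\ell\text{ sampled}]$ and the conditional probability that the $\ell$ flipped positions include $j+1$ but miss $1,\dots,j$; the binomial identity collapses this sum to $p(1-p)^j$, and one then adds $(1-p)^n/n$ coming from the shift step (when $\ell = 0$ is sampled and replaced by $\ell = 1$, the single flipped bit is bit $j+1$ with probability $1/n$), yielding $p_j = p(1-p)^j + (1-p)^n/n$ as already stated in Section~\ref{sec:implementLO}.

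Next, applying the analogue of Theorem~\ref{thm:LO2} with stopping fitness $k$ rather than $n$, I would bound the expected time to reach a search point of \leadingones-value at least $k$ by $\frac{1}{2}\sum_{j=0}^{k-1}\E[A_j]$. The factor~$1/2$ is the familiar \emph{free-rider} factor for \leadingones: at the moment a search point of fitness $j$ is first generated, its $(j+1)$-st bit is uniformly random (bits strictly past the current leading-ones prefix never enter the selection decision and hence remain at their initial uniform distribution), so fitness level $j$ actually demands a dedicated improvement step only with probability $1/2$. Plugging in the four expressions for $p_j$ and evaluating the resulting geometric (or near-geometric) sums --- using, for the \oeashift, the substitution $i = n-j$ to match the stated form --- yields the four bounds in the theorem.

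The only genuinely delicate point is arguing that the free-rider factor $1/2$ remains valid when the stopping target is $k<n$ rather than the optimum. The argument in~\cite{BottcherDN10} relies purely on the local fact that the $(j+1)$-st bit is uniformly distributed at the moment fitness level $j$ is first reached; this is a property of \leadingones together with elitist acceptance, and is therefore entirely insensitive to when we decide to stop the algorithm. I would include a short remark to that effect, after which the rest of the proof reduces to closed-form evaluation of the four sums, structurally identical to the calculations already performed in Section~\ref{sec:implementLO}.
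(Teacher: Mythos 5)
Your proposal is correct and follows essentially the same route the paper intends: the paper gives no explicit proof but points to the fitness-level machinery of Theorems~\ref{thm:LO1} and~\ref{thm:LO2} and the per-level improvement probabilities already computed in Section~\ref{sec:implementLO}, and your truncation of the sum at level $k-1$ (with the re-indexing $i=n-j$ for the \oeashift) reproduces exactly the four stated bounds. Your remark that the free-rider factor $\tfrac12$ is a local property of the bit at position $j+1$ and hence unaffected by stopping at an intermediate target is the right justification and is the only point the paper leaves fully implicit.
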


\begin{figure}
\begin{center}
\includegraphics[scale=0.2]{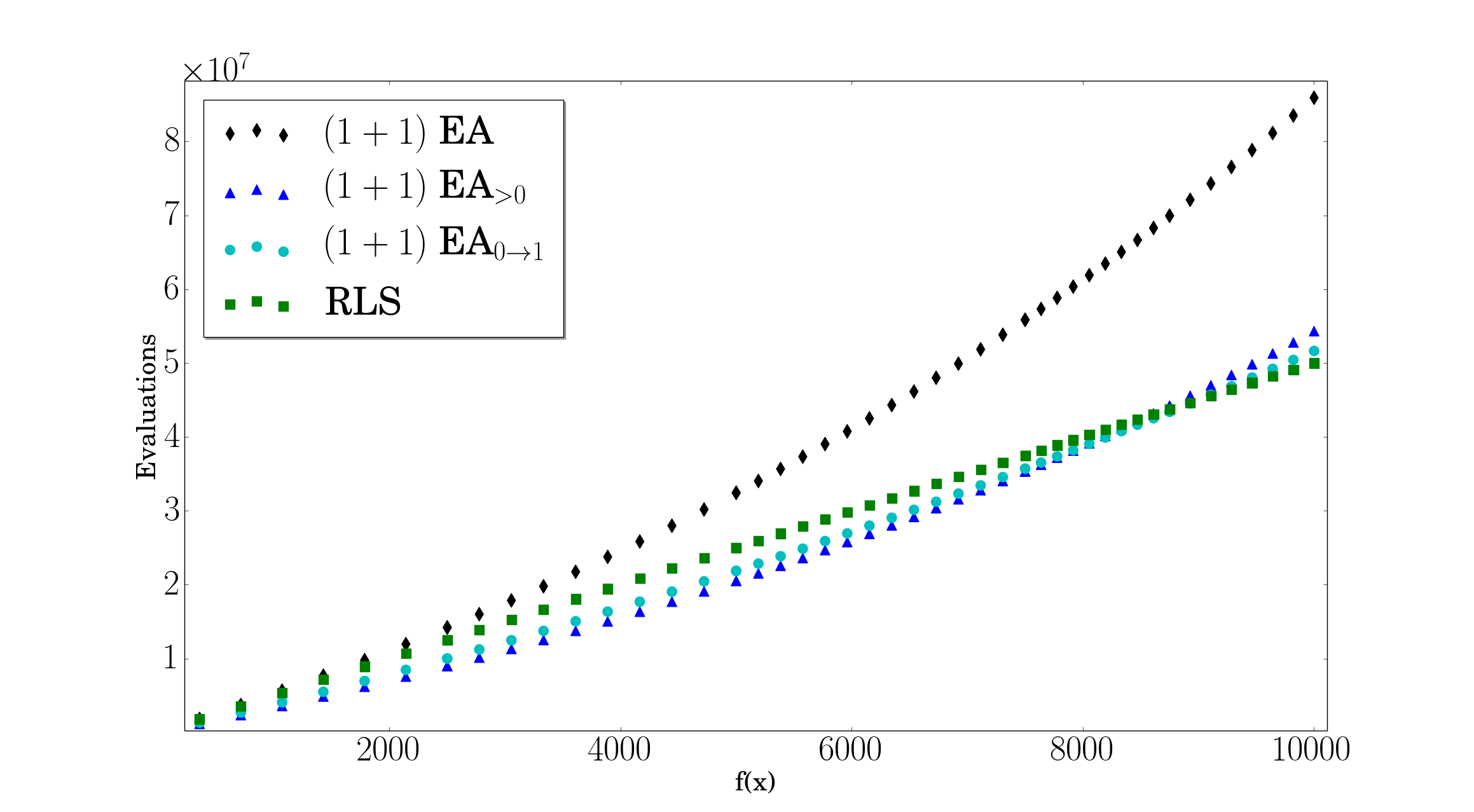}
\end{center}
\caption{Runtime profile of the respective algorithms (with mutation rate $p=1/n$ for the \oea and its variants) on \leadingones for problem size~$n=10\,000$ according to Theorem~\ref{thm:profileLO}.}
\label{fig:theo-LO-profile}
\end{figure}
As above we plot these computed runtime profiles in Figure~\ref{fig:theo-LO-profile}. We observe that the \oeares is the best of all four algorithms for intermediate fitness levels $\leq 7\,980$. The \oeashift has the smallest expected runtime to reach the intermediate fitness values between $7\,980$ to $8\,998$, while RLS is the best of the four algorithms only for fitness levels $i>8\,999$. RLS is faster than the \oeares for intermediate fitness values $i>8\,566$. As mentioned above, such insights are very important for the design of parameter/operator selection schemes. 

In Figure~\ref{fig:LO-profile} we show the  empirically observed runtime profiles for 100 independent runs of the different algorithms on \leadingones with $n=500$. 
We observe that, as our theoretical bounds suggest, the \oeares (\oeashift) needs less iterations in expectation than RLS to reach fitness values $i \le 430$ ($i\le 460$), while RLS, on average, reaches fitness levels $i>430$ ($i>460$) faster. The theoretical bounds in Theorem~\ref{thm:profileLO} suggest cut-off points at $i=429$ ($i=449$), matching our empirical findings quite well. Statistical information for this experiment can be found Table~\ref{tab:LOprofile}. 
\begin{figure}
\begin{center}
\includegraphics[scale=0.2]{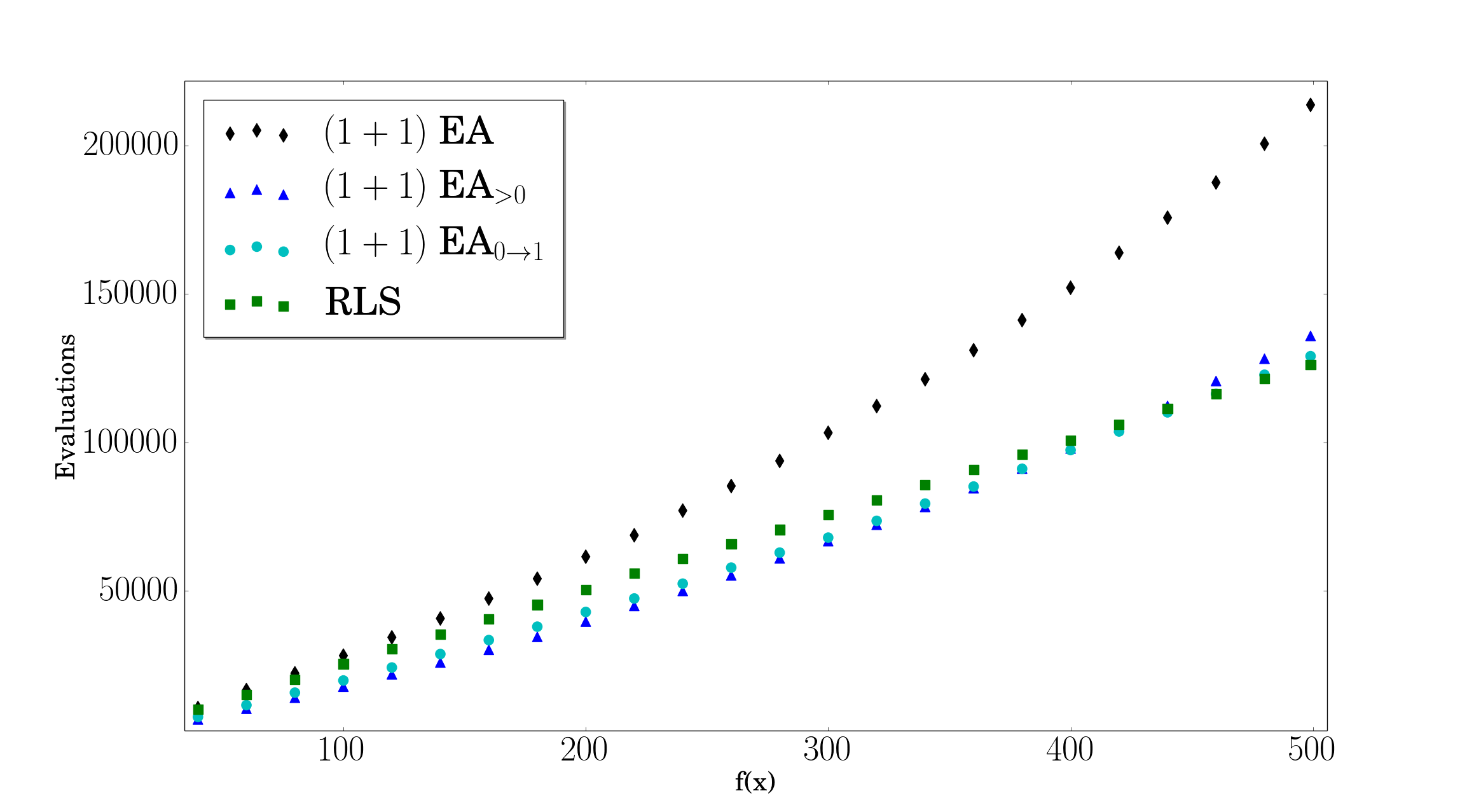}
\end{center}
\caption{Runtime profile for 100 runs of the respective algorithms (with mutation rate $p=1/n$ for the \oea and its variants) on \leadingones for problem size~$n=500$.}
\label{fig:LO-profile}
\end{figure} 
\section{Discussion and Future Works}
\label{sec:conclusions}

We hope to trigger with this work an extended discussion on how to make theoretical results in the domain of evolutionary computation more relevant and interpretable for practitioners. We have suggested two different steps into this direction, (1) do not charge an algorithm for function evaluations when the offspring equals one of its parents (in case this is easy to detect), and (2) report first hitting times not only for the optimum but also for intermediate fitness levels. Naturally, our work can only be a pointer to a more practice-aware theory, and we are aware that there are many more steps that have to be taken. In particular, we believe that the following observations need to be discussed in more detail. 
\begin{itemize}
	\item Many runtime statements report only expected optimization times. However, it is often interesting to understand the \emph{probability distribution of the optimization time}, in particular for problems where the variance can be large. Runtime analysis has recently seen an increased interest in these runtime distributions, cf., for example,~\cite{DoerrG13,Kotzing16} and follow-up works. 
	\item Similar to the previous point, problems exist where the expected optimization time can be very large even if the probability to hit an optimal solution within a small number of iterations is small. In~\cite{DoerrL15model} a so-called $p$-Monte Carlo complexity measure has been introduced, measuring the expected time to \emph{hit an optimal solution with probability at least $1-p$.} Similar suggestions can be found in~\cite{ZhouLLH12}.
	\item The runtime profiles suggested in Section~\ref{sec:profiles} complements the \emph{fixed-budget view} advocated by Jansen and Zarges~\cite{JansenZ14}. We feel that there is a need for combined measures that are capable of describing the \emph{anytime behavior of an EA}. Regret-based measure as used in machine learning could be a key here, but we haven't been able so far to identify a fully satisfying measure. 
	\end{itemize}
From a theoretical point of view, our suggested changes are easily implementable. Our work has nevertheless unveiled a quite remarkable result, the superiority of the \tpores over any unary unbiased black-box algorithm. We are confident that our performance measure will yield similar results for other problems and algorithms, with the potential of changing our view on fundamental questions like the benefits of crossover over mutation, (dis-)advantages of elitism vs. non-elitism, etc. 

\subsubsection*{Acknowledgments.}
We would like to thank Benjamin Doerr, Nikolaus Hansen, and Olivier Teytaud for several independent discussions around the topics of this work.

Our research benefited from the support of the ``FMJH Program Gaspard Monge in optimization and operation research'', and from the support to this program from EDF. 

Parts of our work have been inspired by COST Action CA15140: Improving Applicability of Nature-Inspired Optimisation by Joining Theory and Practice (ImAppNIO).

}

\begin{thebibliography}{HPR{\etalchar{+}}14}

\bibitem[AM16]{AletiM16}
Aldeida Aleti and Irene Moser.
\newblock A systematic literature review of adaptive parameter control methods
  for evolutionary algorithms.
\newblock {\em ACM Comput. Surv.}, 49:56:1--56:35, 2016.

\bibitem[BDN10]{BottcherDN10}
S.~B{\"o}ttcher, B.~Doerr, and F.~Neumann.
\newblock Optimal fixed and adaptive mutation rates for the {L}eading{O}nes
  problem.
\newblock In {\em Proc.~of Parallel Problem Solving from Nature (PPSN'10)},
  volume 6238 of {\em Lecture Notes in Computer Science}, pages 1--10.
  Springer, 2010.

\bibitem[CD17]{CarvalhoD17}
Eduardo {Carvalho Pinto} and Carola Doerr.
\newblock Discussion of a more practice-aware runtime analysis for evolutionary
  algorithms.
\newblock In {\em Proc. of Artificial Evolution (EA'17)}, pages 298--305, 2017.
\newblock Available at
  \url{https://ea2017.inria.fr//EA2017_Proceedings_web_ISBN_978-2-9539267-7-4.pdf}.

\bibitem[CD18]{CarvalhoD18}
Eduardo {Carvalho Pinto} and Carola Doerr.
\newblock A simple proof for the usefulness of crossover in black-box
  optimization.
\newblock In {\em Proc. of Parallel Problem Solving from Nature (PPSN'18)},
  volume 11102 of {\em Lecture Notes in Computer Science}, pages 29--41.
  Springer, 2018.

\bibitem[DD15a]{DoerrD15self}
Benjamin Doerr and Carola Doerr.
\newblock Optimal parameter choices through self-adjustment: Applying the
  1/5-th rule in discrete settings.
\newblock In {\em Proc. of Genetic and Evolutionary Computation Conference
  (GECCO'15)}, pages 1335--1342. {ACM}, 2015.

\bibitem[DD15b]{DoerrD15tight}
Benjamin Doerr and Carola Doerr.
\newblock A tight runtime analysis of the (1+({\(\lambda\)}, {\(\lambda\)}))
  genetic algorithm on {OneMax}.
\newblock In {\em Proc. of Genetic and Evolutionary Computation Conference
  (GECCO'15)}, pages 1423--1430. {ACM}, 2015.

\bibitem[DD16]{DoerrD16RLS}
Benjamin Doerr and Carola Doerr.
\newblock The impact of random initialization on the runtime of randomized
  search heuristics.
\newblock {\em Algorithmica}, 75:529--553, 2016.

\bibitem[DDE15]{DoerrDE15}
Benjamin Doerr, Carola Doerr, and Franziska Ebel.
\newblock From black-box complexity to designing new genetic algorithms.
\newblock {\em Theoretical Computer Science}, 567:87 -- 104, 2015.

\bibitem[DDK16]{DoerrDK16PPSN}
Benjamin Doerr, Carola Doerr, and Timo K{\"{o}}tzing.
\newblock Provably optimal self-adjusting step sizes for multi-valued decision
  variables.
\newblock In {\em Proc. of Parallel Problem Solving from Nature (PPSN'16)},
  volume 9921 of {\em Lecture Notes in Computer Science}, pages 782--791.
  Springer, 2016.

\bibitem[DDY16a]{DoerrDY16PPSN}
Benjamin Doerr, Carola Doerr, and Jing Yang.
\newblock $k$-bit mutation with self-adjusting $k$ outperforms standard bit
  mutation.
\newblock In {\em Proc. of Parallel Problem Solving from Nature (PPSN'16)},
  volume 9921 of {\em Lecture Notes in Computer Science}, pages 824--834.
  Springer, 2016.

\bibitem[DDY16b]{DoerrDY16}
Benjamin Doerr, Carola Doerr, and Jing Yang.
\newblock Optimal parameter choices via precise black-box analysis.
\newblock In {\em Proc. of Genetic and Evolutionary Computation Conference
  (GECCO'16)}, pages 1123--1130. {ACM}, 2016.

\bibitem[DG13]{DoerrG13}
Benjamin Doerr and Leslie~Ann Goldberg.
\newblock Adaptive drift analysis.
\newblock {\em Algorithmica}, 65:224--250, 2013.

\bibitem[DJW12]{DoerrJW12}
Benjamin Doerr, Daniel Johannsen, and Carola Winzen.
\newblock Multiplicative drift analysis.
\newblock {\em Algorithmica}, 64:673--697, 2012.

\bibitem[DL15]{DoerrL15model}
Carola Doerr and Johannes Lengler.
\newblock Elitist black-box models: Analyzing the impact of elitist selection
  on the performance of evolutionary algorithms.
\newblock In {\em Proc.~of Genetic and Evolutionary Computation Conference
  (GECCO'15)}, pages 839--846. ACM, 2015.

\bibitem[DL16]{DangL16}
Duc{-}Cuong Dang and Per~Kristian Lehre.
\newblock Self-adaptation of mutation rates in non-elitist populations.
\newblock In {\em Proc. of Parallel Problem Solving from Nature (PPSN'16)},
  volume 9921 of {\em Lecture Notes in Computer Science}, pages 803--813.
  Springer, 2016.

\bibitem[Doe16]{Doerr16}
Benjamin Doerr.
\newblock Optimal parameter settings for the $(1+(\lambda, \lambda))$ genetic
  algorithm.
\newblock In {\em Proc. of Genetic and Evolutionary Computation Conference
  (GECCO'16)}, pages 1107--1114. {ACM}, 2016.

\bibitem[EHM99]{EibenHM99}
Agoston~Endre Eiben, Robert Hinterding, and Zbigniew Michalewicz.
\newblock Parameter control in evolutionary algorithms.
\newblock {\em IEEE Transactions on Evolutionary Computation}, 3:124--141,
  1999.

\bibitem[EMSS07]{EibenMSS07}
A.~E. Eiben, Zbigniew Michalewicz, Marc Schoenauer, and James~E. Smith.
\newblock Parameter control in evolutionary algorithms.
\newblock In {\em Parameter Setting in Evolutionary Algorithms}, volume~54 of
  {\em Studies in Computational Intelligence}, pages 19--46. Springer, 2007.

\bibitem[Gol]{GoldmanGITHUB}
Brian~W. Goldman.
\newblock Github repository.
\newblock \url{https://github.com/brianwgoldman?tab=repositories}.

\bibitem[GP15]{GoldmanP15}
Brian~W. Goldman and William~F. Punch.
\newblock Fast and efficient black box optimization using the parameter-less
  population pyramid.
\newblock {\em Evolutionary Computation}, 23:451--479, 2015.

\bibitem[HPR{\etalchar{+}}14]{HwangPRTC14}
Hsien{-}Kuei Hwang, Alois Panholzer, Nicolas Rolin, Tsung{-}Hsi Tsai, and
  Wei{-}Mei Chen.
\newblock Probabilistic analysis of the (1+1)-evolutionary algorithm.
\newblock {\em CoRR}, abs/1409.4955, 2014.

\bibitem[JZ11]{JansenZ11}
Thomas Jansen and Christine Zarges.
\newblock Analysis of evolutionary algorithms: from computational complexity
  analysis to algorithm engineering.
\newblock In {\em Proc. of Foundations of Genetic Algorithms (FOGA'11)}, pages
  1--14. {ACM}, 2011.

\bibitem[JZ14]{JansenZ14}
Thomas Jansen and Christine Zarges.
\newblock Performance analysis of randomised search heuristics operating with a
  fixed budget.
\newblock {\em Theoretical Computer Science}, 545:39--58, 2014.

\bibitem[KHE15]{KarafotiasHE15}
G.~Karafotias, M.~Hoogendoorn, and A.E. Eiben.
\newblock Parameter control in evolutionary algorithms: Trends and challenges.
\newblock {\em IEEE Transactions on Evolutionary Computation}, 19:167--187,
  2015.

\bibitem[K{\"{o}}t16]{Kotzing16}
Timo K{\"{o}}tzing.
\newblock Concentration of first hitting times under additive drift.
\newblock {\em Algorithmica}, 75:490--506, 2016.

\bibitem[LW12]{LehreW12}
Per~Kristian Lehre and Carsten Witt.
\newblock Black-box search by unbiased variation.
\newblock {\em Algorithmica}, 64:623--642, 2012.

\bibitem[Sud12]{Sudholt12}
Dirk Sudholt.
\newblock Crossover speeds up building-block assembly.
\newblock In {\em Proc. of Genetic and Evolutionary Computation Conference
  (GECCO'12)}, pages 689--702. ACM, 2012.

\bibitem[Sud13]{Sudholt13}
Dirk Sudholt.
\newblock A new method for lower bounds on the running time of evolutionary
  algorithms.
\newblock {\em IEEE Transactions on Evolutionary Computation}, 17:418--435,
  2013.

\bibitem[Wit13]{Witt13j}
Carsten Witt.
\newblock Tight bounds on the optimization time of a randomized search
  heuristic on linear functions.
\newblock {\em Combinatorics, Probability {\&} Computing}, 22:294--318, 2013.

\bibitem[ZLLH12]{ZhouLLH12}
Dong Zhou, Dan Luo, Ruqian Lu, and Zhangang Han.
\newblock The use of tail inequalities on the probable computational time of
  randomized search heuristics.
\newblock {\em Theoretical Computer Science}, 436:106--117, 2012.

\end{thebibliography}
\newcommand{\etalchar}[1]{$^{#1}$}

\newpage
\appendix


\section{A Comment on the Benchmark Functions}
\label{app:benchmark}

Theoretical works are often criticized for regarding highly artificial benchmark problems. Indeed, while the power of EAs is certainly to be seen in applications to highly complex problems not admitting a thorough theoretical analysis, theoreticians regard simple benchmark functions like \onemax and \leadingones in the hope that, among other reasons,
\begin{itemize}
	\item they give insights into how the studied algorithms perform on the easier parts of a difficult optimization problem, 
	\item in order to understand some basic working principles of the algorithms, which can then be used for the analysis of more complex problems, more complex algorithms, for the development of new algorithmic ideas, etc., 
	\item the theoretical investigations, which even for seemingly simple algorithms and problems can be surprisingly complex, triggers the development of new analytical tools for the analysis of randomized algorithms, and
	\item for very precise mathematical statements a comparison between theoretical results and empirical performance can be made, helping us understand, for example, how the parameter choice influences the optimization time (and how the suggestions obtained via a thorough mathematical analysis differs from that obtained by empirical means). 
\end{itemize}

We furthermore note that even for \onemax and \leadingones, despite being studied since the very early days of theory of evolutionary computation, several unsolved problems exist, many of which are of seemingly simple nature such as the optimal dynamic mutation rate of the \oea for \onemax. 
Several important advances could be made in the last few years. These results have often required the development of rather sophisticated mathematical tools, as is witnessed by the different drift analysis theorems that have been developed in the last 7 years.

While we are certainly aware that the insights obtained from these benchmark problems (and the simplified evolutionary algorithms) may not always or not easily transfer to more realistic real-world optimization challenges, the concepts proposed in this work are applicable to a very broad range of theory- as well as practice-driven algorithms and problems. 

\subsection{Unbiasedness}
We would also like to point out that all the algorithms considered in this work are unbiased in the sense of Lehre and Witt~\cite{LehreW12}, i.e., their runtimes are identical for all functions that are obtained from the considered benchmark problems by composing them with a Hamming-distance preserving automorphism of the hypercube. We recall that a Hamming-automorphism of the hypercube is a one-to-one mapping $\sigma:\{0,1\}^n \rightarrow \{0,1\}^n$ such that for all $x,y \in \{0,1\}^n$ it holds that the Hamming distance $H(x,y)$ equals that of the images $H(\sigma(x), \sigma(y))$. The composition $f \circ \sigma$ of a pseudo-Boolean function $f:\{0,1\}^n \rightarrow \R$ with a Hamming-automorphism $\sigma$ has a fitness landscape that is isomorphic to that of~$f$ .

For \onemax this implies that all algorithms considered in this work behave identically on all functions of the form $\OM_z:\{0,1\}^n \rightarrow \R, x \mapsto |\{i \in [n] \mid x_i=z_i\}|$, where $z$ is an arbitrarily chosen binary string of length $n$. That is, all results reported for \onemax apply to any of these generalized functions $\OM_z$.

Similarly, for \leadingones, the composed functions are those of the form $\LO_{z,\pi}(x) := \max \{ i \in [0..n]\mid \forall j \leq i: z_{\pi(j)} = x_{\pi(j)}\}$, where $\pi$ is an arbitrary permutation of $[n]$ and $z$ an arbitrary binary string of length $n$. All results stated in this report for \LO applies to any of these generalized functions $\LO_{z,\pi}$.

\section{Tables with Statistical Data for the Experimental Results}
\label{app:tables}

\subsection{Statistical Details for Figure~\ref{fig:oeaOM}}
\label{app:tab:oeaOM}
{\footnotesize
\begin{longtable}{l|l||r|r|r|r|r|r|r}%
\hline
& &  \multicolumn{5}{c|}{\textbf{Percentile}}&&\textbf{StdDev/}\\
 \textbf{$n$} & \textbf{Algorithm} & 2& 25 & 50 & 75 & 98 &\textbf{Mean} & \textbf{Mean}\\
\hline
500 & \oea & 4885 & 6468 & 7436 & 8538 & 11566 & 7569 & 22.3\%\\
500 & \oeares & 3039 & 3919 & 4370 & 5125 & 7245 & 4642 & 21.8\%\\
500 & \oeashift & 2617 & 3430 & 3986 & 4443 & 5790 & 4045 & 19.9\%\\
500 & RLS & 2029 & 2493 & 2837 & 3491 & 4549 & 3050 & 23.4\%\\
\hline
1000 & \oea & 11803 & 14508 & 16137 & 17882 & 26448 & 16755 & 20.4\%\\
1000 & \oeares & 6855 & 9055 & 10067 & 11556 & 16313 & 10445 & 21.6\%\\
1000 & \oeashift & 6402 & 7416 & 8435 & 9459 & 13246 & 8697 & 18.8\%\\
1000 & RLS & 4922 & 6024 & 6887 & 7660 & 10031 & 7046 & 20.5\%\\
\hline
1500 & \oea & 20044 & 23308 & 26176 & 28633 & 43838 & 27129 & 20.5\%\\
1500 & \oeares & 11686 & 14839 & 16722 & 19588 & 26537 & 17358 & 19.3\%\\
1500 & \oeashift & 9412 & 11823 & 12998 & 14745 & 21422 & 13647 & 20.2\%\\
1500 & RLS & 7761 & 9761 & 10618 & 11662 & 14660 & 10781 & 15.4\%\\
\hline
2000 & \oea & 25377 & 32941 & 36046 & 42765 & 58925 & 38462 & 20.7\%\\
2000 & \oeares & 16741 & 21482 & 24132 & 25998 & 36370 & 24290 & 17.8\%\\
2000 & \oeashift & 13901 & 17286 & 19091 & 20668 & 26092 & 19302 & 15.5\%\\
2000 & RLS & 10721 & 13095 & 14798 & 17374 & 21276 & 15412 & 20.3\%\\
\hline
2500 & \oea & 34627 & 41715 & 47842 & 52869 & 67674 & 48362 & 16.6\%\\
2500 & \oeares & 21738 & 27207 & 29369 & 33089 & 41874 & 30495 & 17.2\%\\
2500 & \oeashift & 18977 & 22062 & 23831 & 27042 & 33713 & 24809 & 14.8\%\\
2500 & RLS & 14113 & 17054 & 18776 & 20921 & 26823 & 19192 & 15.2\%\\
\hline
3000 & \oea & 45746 & 51345 & 58094 & 65037 & 83549 & 59835 & 16.6\%\\
3000 & \oeares & 29319 & 32940 & 36178 & 39756 & 48863 & 37028 & 14.7\%\\
3000 & \oeashift & 22933 & 27044 & 30269 & 33849 & 47995 & 31344 & 18.9\%\\
3000 & RLS & 17045 & 21361 & 23586 & 26998 & 36154 & 24547 & 17.9\%\\
\hline
3500 & \oea & 53926 & 62548 & 68329 & 76639 & 97308 & 70608 & 15.6\%\\
3500 & \oeares & 32088 & 38664 & 42530 & 47916 & 63506 & 44874 & 18.4\%\\
3500 & \oeashift & 24333 & 33279 & 36474 & 39726 & 50482 & 36979 & 15.6\%\\
3500 & RLS & 22385 & 25164 & 27867 & 31124 & 38292 & 28471 & 14.3\%\\
\hline
4000 & \oea & 60303 & 73419 & 78445 & 87109 & 106910 & 80743 & 13.6\%\\
4000 & \oeares & 38121 & 46779 & 50841 & 56137 & 68302 & 52036 & 15.1\%\\
4000 & \oeashift & 32561 & 37462 & 41529 & 46539 & 59900 & 42703 & 16.1\%\\
4000 & RLS & 24922 & 29349 & 33493 & 36808 & 48134 & 33684 & 17.4\%\\
\hline
\label{tab:oeaOM}
\end{longtable}}

\subsection{Statistical Details for Figure~\ref{fig:oeaLO}}
\label{app:tab:oeaLO}
{\footnotesize
\begin{longtable}{l|l||r|r|r|r|r|r|r}%
\hline
& &  \multicolumn{5}{c|}{\textbf{Percentile}}&&\textbf{StdDev/}\\
 \textbf{$n$} & \textbf{Algorithm} & 2& 25 & 50 & 75 & 98 &\textbf{Mean} & \textbf{Mean}\\
\hline 
100 & \oea & 5431 & 7789 & 8637 & 9460 & 11877 & 8580 & 17.2\%\\
100 & \oeares & 3761 & 4831 & 5465 & 6104 & 8069 & 5574 & 18.5\%\\
100 & \oeashift & 3380 & 4434 & 5072 & 5615 & 7787 & 5194 & 19.1\%\\
100 & RLS & 3017 & 4391 & 4955 & 5565 & 6598 & 5005 & 16.9\%\\
\hline
200 & \oea & 25751 & 31088 & 34581 & 37314 & 41683 & 34400 & 12.1\%\\
200 & \oeares & 15687 & 19686 & 21026 & 23129 & 27120 & 21438 & 12.5\%\\
200 & \oeashift & 15589 & 18985 & 21164 & 22697 & 26539 & 20970 & 12.9\%\\
200 & RLS & 14954 & 18591 & 20134 & 21645 & 25353 & 20157 & 12.5\%\\
\hline
300 & \oea & 62296 & 71236 & 77600 & 83690 & 90711 & 77160 & 10.4\%\\
300 & \oeares & 39330 & 45502 & 48534 & 52776 & 59350 & 49080 & 9.9\%\\
300 & \oeashift & 35476 & 42809 & 46044 & 48796 & 57367 & 45961 & 10.8\%\\
300 & RLS & 36237 & 42464 & 45611 & 47555 & 52694 & 45283 & 9.1\%\\
\hline
400 & \oea & 109508 & 127281 & 135069 & 145748 & 162870 & 137192 & 10.4\%\\
400 & \oeares & 69665 & 79929 & 86964 & 90581 & 103756 & 86343 & 9.0\%\\
400 & \oeashift & 68047 & 76346 & 81149 & 86265 & 97555 & 81772 & 8.4\%\\
400 & RLS & 67930 & 75865 & 79498 & 83673 & 93974 & 80153 & 7.6\%\\
\hline
500 & \oea & 180855 & 203457 & 212353 & 224810 & 243611 & 213616 & 7.1\%\\
500 & \oeares & 118828 & 128745 & 134675 & 143121 & 156814 & 135853 & 7.1\%\\
500 & \oeashift & 110463 & 121950 & 128466 & 134415 & 148957 & 129039 & 7.1\%\\
500 & RLS & 100044 & 119000 & 126519 & 132246 & 151616 & 126160 & 8.7\%\\
\hline
600 & \oea & 258824 & 294309 & 304544 & 324434 & 356231 & 308007 & 7.4\%\\
600 & \oeares & 166032 & 186940 & 197054 & 207332 & 220871 & 196690 & 7.1\%\\
600 & \oeashift & 159298 & 180316 & 187209 & 194224 & 212286 & 187251 & 6.8\%\\
600 & RLS & 159750 & 173582 & 180506 & 185784 & 203100 & 180911 & 5.7\%\\
\hline
\label{tab:oeaLO}
\end{longtable}}

\subsection{Statistical Details for Figure~\ref{fig:ga}}
\label{app:tab:ga}

{\footnotesize
\begin{longtable}{l|l||r|r|r|r|r|r|r}%
\hline
& &  \multicolumn{5}{c|}{\textbf{Percentile}}&&\textbf{StdDev/}\\
 \textbf{$n$} & \textbf{Algorithm} & 2& 25 & 50 & 75 & 98 &\textbf{Mean} & \textbf{Mean}\\
\hline 
500 & \ga & 4082 & 4532 & 4746 & 4986 & 5748 & 4791 & 8.8\%\\
500 & RLS & 2029 & 2493 & 2837 & 3491 & 4549 & 3050 & 23.4\%\\
500 & \tpores & 1951 & 2422 & 2816 & 3343 & 4317 & 2928 & 21.4\%\\
500 & \gaopt & 2752 & 3065 & 3280 & 3499 & 3929 & 3300 & 9.4\%\\
\hline
1000 & \ga & 8238 & 9206 & 9684 & 10222 & 11120 & 9754 & 8.1\%\\
1000 & RLS & 4922 & 6024 & 6887 & 7660 & 10031 & 7046 & 20.5\%\\
1000 & \tpores & 4252 & 5503 & 6090 & 6700 & 8753 & 6200 & 17.0\%\\
1000 & \gaopt & 5855 & 6378 & 6716 & 6993 & 8060 & 6771 & 8.3\%\\
\hline
1500 & \ga & 13134 & 14162 & 14604 & 15234 & 16816 & 14767 & 6.2\%\\
1500 & RLS & 7761 & 9761 & 10618 & 11662 & 14660 & 10781 & 15.4\%\\
1500 & \tpores & 7219 & 8872 & 9554 & 10422 & 11911 & 9642 & 12.0\%\\
1500 & \gaopt & 8985 & 9522 & 10058 & 10446 & 11742 & 10054 & 6.9\%\\
\hline
2000 & \ga & 17502 & 18960 & 19604 & 20384 & 22240 & 19749 & 5.7\%\\
2000 & RLS & 10721 & 13095 & 14798 & 17374 & 21276 & 15412 & 20.3\%\\
2000 & \tpores & 10791 & 12179 & 13393 & 14963 & 20410 & 13856 & 16.4\%\\
2000 & \gaopt & 12146 & 13139 & 13511 & 13940 & 15477 & 13638 & 6.4\%\\
\hline
2500 & \ga & 21828 & 24024 & 24558 & 25234 & 26874 & 24614 & 4.2\%\\
2500 & RLS & 14113 & 17054 & 18776 & 20921 & 26823 & 19192 & 15.2\%\\
2500 & \tpores & 13511 & 16081 & 17212 & 18755 & 23870 & 17703 & 13.7\%\\
2500 & \gaopt & 15289 & 16549 & 17076 & 17673 & 19275 & 17134 & 5.8\%\\
\hline
3000 & \ga & 27300 & 28716 & 29596 & 30430 & 32788 & 29736 & 4.9\%\\
3000 & RLS & 17045 & 21361 & 23586 & 26998 & 36154 & 24547 & 17.9\%\\
3000 & \tpores & 17493 & 19601 & 21447 & 23292 & 30849 & 22081 & 15.4\%\\
3000 & \gaopt & 18860 & 19906 & 20549 & 21108 & 23088 & 20641 & 5.1\%\\
\hline
3500 & \ga & 31888 & 33516 & 34624 & 35264 & 37190 & 34544 & 3.8\%\\
3500 & RLS & 22385 & 25164 & 27867 & 31124 & 38292 & 28471 & 14.3\%\\
3500 & \tpores & 19598 & 22805 & 25340 & 27811 & 36918 & 25925 & 15.6\%\\
3500 & \gaopt & 21858 & 23468 & 24119 & 24933 & 27131 & 24276 & 5.0\%\\
\hline
4000 & \ga & 36648 & 38758 & 39848 & 40390 & 43014 & 39733 & 3.8\%\\
4000 & RLS & 24922 & 29349 & 33493 & 36808 & 48134 & 33684 & 17.4\%\\
4000 & \tpores & 23175 & 26752 & 28673 & 31389 & 36222 & 29372 & 12.3\%\\
4000 & \gaopt & 25106 & 26564 & 27466 & 28139 & 30187 & 27496 & 4.7\%\\
\hline
4500 & \ga & 41698 & 43520 & 44434 & 45298 & 48550 & 44664 & 3.9\%\\
4500 & RLS & 27699 & 32726 & 35429 & 39177 & 49298 & 36439 & 13.5\%\\
4500 & \tpores & 27140 & 30578 & 32795 & 35492 & 43957 & 33682 & 12.7\%\\
4500 & \gaopt & 28326 & 29927 & 30900 & 31551 & 33991 & 30988 & 4.8\%\\
\hline
5000 & \ga & 46568 & 48378 & 49468 & 50912 & 54402 & 49857 & 4.9\%\\
5000 & RLS & 30940 & 37046 & 40483 & 46237 & 55083 & 41555 & 15.3\%\\
5000 & \tpores & 30441 & 34279 & 38186 & 40793 & 50552 & 38437 & 13.3\%\\
5000 & \gaopt & 32234 & 33514 & 34348 & 35436 & 38117 & 34666 & 4.4\%\\
\hline
\label{tab:ga}
\end{longtable}}

\subsection{Statistical Details for Figure~\ref{fig:OMprofile}}
\label{app:tab:OMprofile}
{\footnotesize
\begin{longtable}{l|l||r|r|r|r|r|r|r}%
\hline
& &  \multicolumn{5}{c|}{\textbf{Percentile}}&&\textbf{StdDev/}\\
 \textbf{$n$} & \textbf{Algorithm} & 2& 25 & 50 & 75 & 98 &\textbf{Mean} & \textbf{Mean}\\
\hline 
2600 & \oea & 2034 & 2213 & 2309 & 2385 & 2576 & 2304 & 5.6\%\\
2600 & \oeares & 1257 & 1394 & 1454 & 1524 & 1617 & 1459 & 6.0\%\\
2600 & \oeashift & 1287 & 1390 & 1436 & 1475 & 1599 & 1435 & 4.7\%\\
2600 & RLS & 1230 & 1376 & 1444 & 1480 & 1588 & 1428 & 5.7\%\\
\hline
2800 & \oea & 3067 & 3300 & 3387 & 3511 & 3666 & 3401 & 4.3\%\\
2800 & \oeares & 1891 & 2087 & 2157 & 2230 & 2303 & 2156 & 4.5\%\\
2800 & \oeashift & 1953 & 2046 & 2092 & 2138 & 2250 & 2091 & 3.7\%\\
2800 & RLS & 1836 & 1973 & 2050 & 2118 & 2205 & 2050 & 4.3\%\\
\hline
3000 & \oea & 4357 & 4659 & 4770 & 4901 & 5063 & 4768 & 3.6\%\\
3000 & \oeares & 2741 & 2945 & 3033 & 3084 & 3234 & 3018 & 3.8\%\\
3000 & \oeashift & 2721 & 2848 & 2900 & 2966 & 3118 & 2905 & 3.5\%\\
3000 & RLS & 2595 & 2701 & 2765 & 2838 & 3015 & 2776 & 3.7\%\\
\hline 
3200 & \oea & 6126 & 6414 & 6570 & 6714 & 6955 & 6567 & 3.3\%\\
3200 & \oeares & 3849 & 4064 & 4148 & 4249 & 4425 & 4153 & 3.2\%\\
3200 & \oeashift & 3646 & 3840 & 3914 & 3992 & 4216 & 3924 & 3.1\%\\
3200 & RLS & 3457 & 3578 & 3656 & 3764 & 3899 & 3671 & 3.2\%\\
\hline
3400 & \oea & 8353 & 8849 & 9027 & 9209 & 9605 & 9037 & 3.1\%\\
3400 & \oeares & 5350 & 5605 & 5696 & 5820 & 6086 & 5705 & 3.1\%\\
3400 & \oeashift & 4898 & 5206 & 5288 & 5376 & 5600 & 5290 & 2.9\%\\
3400 & RLS & 4486 & 4729 & 4812 & 4910 & 5130 & 4817 & 3.0\%\\
\hline
3600 & \oea & 11821 & 12483 & 12720 & 13008 & 13392 & 12730 & 3.0\%\\
3600 & \oeares & 7470 & 7932 & 8049 & 8167 & 8644 & 8050 & 3.1\%\\
3600 & \oeashift & 6869 & 7162 & 7282 & 7399 & 7739 & 7286 & 2.7\%\\
3600 & RLS & 6055 & 6297 & 6409 & 6536 & 6814 & 6428 & 3.0\%\\
\hline
3800 & \oea & 18255 & 19041 & 19547 & 19965 & 20600 & 19505 & 3.2\%\\
3800 & \oeares & 11636 & 12109 & 12362 & 12624 & 13119 & 12362 & 3.1\%\\
3800 & \oeashift & 10141 & 10611 & 10834 & 11000 & 11396 & 10813 & 2.8\%\\
3800 & RLS & 8706 & 9016 & 9202 & 9351 & 9742 & 9199 & 2.8\%\\
\hline
4000 & \oea & 60303 & 73419 & 78445 & 87109 & 106910 & 80743 & 13.6\%\\
4000 & \oeares & 38121 & 46779 & 50841 & 56137 & 68302 & 52036 & 15.1\%\\
4000 & \oeashift & 32561 & 37462 & 41529 & 46539 & 59900 & 42703 & 16.1\%\\
4000 & RLS & 24922 & 29349 & 33493 & 36808 & 48134 & 33684 & 17.4\%\\
\hline
\label{tab:OMprofile}
\end{longtable}}

\subsection{Statistical Details for Figure~\ref{fig:LO-profile}}
\label{app:tab:LOprofile}
{\footnotesize
\begin{longtable}{l|l||r|r|r|r|r|r|r}%
\hline
& &  \multicolumn{5}{c|}{\textbf{Percentile}}&&\textbf{StdDev/}\\
 \textbf{$n$} & \textbf{Algorithm} & 2& 25 & 50 & 75 & 98 &\textbf{Mean} & \textbf{Mean}\\
\hline
200 & RLS & 36642 & 45223 & 49595 & 54415 & 63328 & 50189 & 13.1\% \\
200 & \oea & 43978 & 55993 & 60523 & 66938 & 77029 & 61317 & 12.7\% \\
200 & \oeares & 26407 & 36169 & 39296 & 42197 & 49334 & 39417 & 12.8\% \\
200 & \oeashift & 32697 & 38409 & 42635 & 45902 & 50956 & 42640 & 11.6\% \\
\hline 
300 & RLS & 58189 & 68856 & 75373 & 80530 & 93329 & 75406 & 11.7\% \\
300 & \oea & 82035 & 95247 & 102474 & 109291 & 124700 & 102831 & 9.4\% \\
300 & \oeares & 53246 & 61699 & 66135 & 69922 & 80272 & 66381 & 9.5\% \\
300 & \oeashift & 55803 & 64063 & 68162 & 71373 & 80814 & 67726 & 8.9\% \\
\hline 
400 & RLS & 80955 & 94774 & 99848 & 106075 & 122599 & 100505 & 9.6\% \\
400 & \oea & 124398 & 141719 & 150333 & 161220 & 176583 & 151502 & 8.5\% \\
400 & \oeares & 80748 & 91999 & 98433 & 102542 & 112088 & 97653 & 8.0\% \\
400 & \oeashift & 81130 & 91151 & 97831 & 101446 & 111312 & 97106 & 7.5\% \\
\hline 
500 & RLS & 100044 & 119000 & 126519 & 132246 & 151616 & 126160 & 8.7\% \\
500 & \oea & 180855 & 203457 & 212353 & 224810 & 243611 & 213616 & 7.1\% \\
500 & \oeares & 118828 & 128745 & 134675 & 143121 & 156814 & 135853 & 7.1\% \\
500 & \oeashift & 110463 & 121950 & 128466 & 134415 & 148957 & 129039 & 7.1\% 
\label{tab:LOprofile}
\end{longtable}}
\end{document}